\documentclass{article}
\usepackage{fullpage}
\usepackage[title]{appendix}
\usepackage[utf8]{inputenc} % allow utf-8 input
\usepackage[T1]{fontenc}    % use 8-bit T1 fonts
\usepackage{hyperref}       % hyperlinks
\usepackage{url}            % simple URL typesetting
\usepackage{geometry}
\usepackage[table]{xcolor}

\usepackage{algorithm}
\usepackage{algpseudocode}
\usepackage{diagbox}
\usepackage{multirow}
\usepackage{booktabs}
\usepackage{amsfonts,amssymb,amsthm,amsmath,mathtools}
\usepackage{nicefrac}
\usepackage{microtype}
\usepackage{hhline}
\usepackage{subfigure}
\usepackage{indentfirst}
\usepackage{enumitem}
\usepackage{graphicx}
\usepackage{cleveref}
\usepackage{colortbl}
\usepackage{makecell}
\usepackage{wrapfig}
\usepackage{adjustbox}
\usepackage{array}
\usepackage{caption}
\usepackage{subcaption}
\usepackage{arydshln}
\usepackage{utfsym,appendix}

\newtheorem{definition}{Definition}
\newtheorem{theorem}{Theorem}
\newtheorem{remark}{Remark}
\newtheorem{lemma}{Lemma}

\newtheorem{corollary}{Corollary}
\theoremstyle{plain}
\theoremstyle{definition}

\usepackage{amsmath,amsfonts,bm}

\def\eqref#1{equation~\ref{#1}}
\def\1{\bm{1}}
\def\0{\bm{0}}

\DeclareMathAlphabet{\mathsfit}{\encodingdefault}{\sfdefault}{m}{sl}
\SetMathAlphabet{\mathsfit}{bold}{\encodingdefault}{\sfdefault}{bx}{n}

\usepackage{inputenc}
\usepackage{fontenc}
\usepackage{indentfirst}
\usepackage{cite}
\usepackage{hyperref}
\usepackage{amssymb}
\usepackage{float}
\usepackage{amsmath}
\usepackage{graphicx}
\usepackage{amsthm}

\newtheorem{proposition}{Proposition}

\newcommand{\citet}{\cite}
\newcommand{\citep}{\cite}

\title{The Maximum von Neumann Entropy Principle:\\ Theory and Applications in Machine Learning 
}

\date{}

\author{
Youqi Wu\thanks{Department of Computer Science and Engineering, The Chinese University of Hong Kong, yqwu24@cse.cuhk.edu.hk} ,
Farzan Farnia\thanks{Department of Computer Science and Engineering, The Chinese University of Hong Kong, farnia@cse.cuhk.edu.hk}
}

\begin{document}
\maketitle

\begin{abstract}
    Von Neumann entropy (VNE) is a fundamental quantity in quantum information theory and has recently been adopted in machine learning as a spectral measure of diversity for kernel matrices and kernel covariance operators. While maximizing VNE under constraints is well known in quantum settings, a principled analogue of the classical maximum entropy framework, particularly its decision theoretic and game theoretic interpretation, has not been explicitly developed for VNE in data driven contexts. In this paper, we extend the minimax formulation of the maximum entropy principle due to Grünwald and Dawid to the setting of von Neumann entropy, providing a game-theoretic justification for VNE maximization over density matrices and trace-normalized positive semidefinite operators. This perspective yields a robust interpretation of maximum VNE solutions under partial information and clarifies their role as least committed inferences in spectral domains. We then illustrate how the resulting Maximum VNE principle applies to modern machine learning problems by considering two representative applications, selecting a kernel representation from multiple normalized embeddings via kernel-based VNE maximization, and completing kernel matrices from partially observed entries. These examples demonstrate how the proposed framework offers a unifying information-theoretic foundation for VNE-based methods in kernel learning.
\end{abstract}

\section{Introduction}
Entropy is a fundamental quantity in information theory, quantifying the uncertainty of a random variable and playing a central role in data compression, communication, and statistical inference. In the classical setting, Shannon entropy \cite{Shannon1948} provides a mathematical characterization of uncertainty for a probability distribution. In quantum information theory \cite{vonNeumann1955,NielsenChuang2000}, this notion is extended to quantum states, which are represented by density matrices. A density matrix $\rho$ is a positive semidefinite (PSD) operator with unit trace, and its associated entropy, known as the \emph{von Neumann entropy (VNE)}, is defined as the Shannon entropy of its eigenvalues:
\begin{equation}
S(\rho) \;=\; -\mathrm{Tr}(\rho \log \rho) \;=\; -\sum_i \lambda_i(\rho)\log \lambda_i(\rho),
\end{equation}
where the eigenvalues $\{\lambda_i(\rho)\}$ form a probability distribution.

Recently, VNE has been adopted in machine learning as a tool for measuring diversity and effective dimensionality of data representations \cite{bach2022informationtheorykernelmethods,FriedmanDieng2022Vendi}. In particular, kernel matrices and kernel covariance operators, when normalized to have unit trace, naturally define density-matrix-like objects whose spectra encode global similarity structure \cite{jalali2023information,ospanov2024towards}. This observation has led to the use of VNE-based quantities, such as the Vendi score \cite{FriedmanDieng2022Vendi}, as spectral diversity measures for datasets and generative models. However, in several machine learning applications, the underlying density matrix is not fully observed, for example due to missing entries of the target kernel matrix, noisy measurements, or kernel covariance operators estimated from limited data. This motivates a fundamental question: how should VNE-based analysis be carried out when the density matrix is only partially specified?

In classical information theory, entropy-based statistical inference under partial knowledge can be performed by the \emph{maximum entropy principle}. Introduced by Jaynes in \cite{Jaynes1957}, this principle advocates selecting, among all probability distributions consistent with the given constraints, the distribution with the maximum Shannon entropy value. A canonical example is that the Gaussian distribution maximizes (differential) entropy among all distributions with fixed mean and covariance matrix \cite{CoverThomas2006}. Beyond its variational formulation, the maximum entropy principle admits an operational interpretation as demonstrated by Gr{\"u}nwald and Dawid in \cite{GrunwaldDawid2004}, showing that the maximum entropy solution arises as a minimax-optimal strategy in a game-theoretic setting where a decision maker competes against an adversary over an ambiguity set of distributions.

In this paper, we extend this framework to von Neumann entropy, with a focus on machine learning settings involving partial knowledge of a density matrix. Given an ambiguity set $\mathcal{C}$ of density matrices consistent with the available information, we consider the \emph{maximum VNE principle}, i.e. to choose and base the inference on the density matrix $\rho^*$ with the largest VNE:
\begin{equation}
\rho^\star \;\in\; \arg\max_{\rho \in \mathcal{C}} S(\rho).
\end{equation}
Our main contribution is to formally show that the game-theoretic interpretation of the maximum entropy principle developed by Gr{\"u}nwald and Dawid in \cite{GrunwaldDawid2004} extends naturally to the matrix-valued setting. This yields a principled minimax justification for maximizing VNE over the set of density matrices, directly paralleling the classical case for probability distributions.

We further show that this minimax framework applies beyond VNE to other spectral entropy functionals of interest. In particular, we consider matrix-based R\'enyi entropies, including the quadratic R\'enyi entropy of order two and the R\'enyi entropy of order infinity, which is governed by the maximum eigenvalue \cite{Renyi1961,SanchezGiraldoRaoPrincipe2015}. We also extend the framework to conditional settings, drawing on minimax formulations of supervised learning \cite{FarniaTse2016Minimax}, thereby enabling applications in which kernel objects arise jointly from inputs and outputs.

Finally, we illustrate the proposed maximum VNE principle through two concrete applications in machine learning. The first concerns the selection of a kernel representation from multiple normalized embeddings. By considering convex combinations of the corresponding kernel matrices and applying the Max-VNE principle over the mixture weights, we obtain a principled method for selecting a least-committal kernel mixture that promotes spectral diversity. The second application addresses kernel matrix completion from partial observations, where we select, among all feasible positive semidefinite completions consistent with the observed entries, the one that maximizes VNE. We complement the theoretical developments with numerical experiments on standard image datasets, demonstrating the practical relevance of the maximum VNE principle in data-driven settings.

\section{Related Work}
The maximum entropy principle originates from Shannon’s entropy \cite{Shannon1948} and Jaynes’ proposal of selecting the least-committal model under a group of constraints \cite{Jaynes1957}. In more recent literature, closely related principles replace entropy maximization by minimizing dependence subject to partial constraints. Notable examples include the \emph{minimum mutual information} principle, which generalizes maximum entropy to discriminative learning by minimizing $I(X;Y)$ under marginal and feature constraints \cite{GlobersonTishby2004MinMI}, and the \emph{minimum maximal correlation} principle, which selects joint distributions minimizing the Hirschfeld--Gebelein--R\'enyi (HGR) correlation under low-order constraints \cite{FarniaRazaviyaynKannanTse2015MinHGR,RazaviyaynFarniaTse2015DiscreteRenyi}. These dependence-minimization principles admit minimax interpretations and lead to robust learning rules when only partial information is available. Complementarily, the minimax and game-theoretic foundation of maximum entropy itself is formalized by Gr{\"u}nwald and Dawid \cite{GrunwaldDawid2004}. These works motivate extending entropy- and dependence-based minimax principles from probability distributions to matrix-valued entropy objectives.

In quantum information theory, uncertainty is described by density matrices, which are positive semidefinite, unit-trace operators, and von Neumann entropy extends Shannon entropy to this operator-valued setting via the eigenvalue distribution \cite{vonNeumann1955,NielsenChuang2000}. Maximum entropy reasoning for quantum systems is well established, particularly in quantum statistical mechanics and inference, where it is used to select a density matrix consistent with partial information. These works motivate studying maximum entropy principles directly at the level of density matrices, but they are typically grounded in physical constraints rather than data-driven ambiguity sets.

Von Neumann entropy and related spectral entropy functionals have recently gained prominence in machine learning as measures of diversity and effective dimensionality for kernel matrices and kernel covariance operators. The Vendi score defines diversity as the exponential of the von Neumann entropy of a normalized similarity matrix and has been used as an interpretable diversity metric for datasets and generative models \cite{FriedmanDieng2022Vendi}, with related similarity-based diversity families developed subsequently \cite{pasarkar2024cousins}. In parallel, kernel-based matrix information measures estimate entropy-like quantities directly from normalized Gram matrices, including matrix-based R\'enyi-type entropies and operator-level information-theoretic viewpoints \cite{SanchezGiraldoRaoPrincipe2015,bach2022informationtheorykernelmethods}. These ideas have been applied to information-theoretic evaluation of generative models and multimodal learning \cite{jalali2023information,zhang2024interpretable,ospanov2024towards}. Our work differs from these approaches by providing a game-theoretic foundation for a maximum von Neumann entropy principle and by applying it to kernel-induced density matrices under partial observation.

\section{Preliminaries}
\subsection{Entropy and the Maximum Entropy Principle}
\label{subsec:prelim:classical}
Let $\mathcal{X}$ be a finite alphabet and let $\Delta(\mathcal{X})$ denote the
probability simplex over $\mathcal{X}$. For a distribution
$P\in\Delta(\mathcal{X})$, the Shannon entropy is defined as
\begin{equation}
H(P)\;:=\;-\sum_{x\in\mathcal{X}} P(x)\log P(x).
\label{eq:shannon-entropy}
\end{equation}
Given a feasible family $\mathcal{P}\subseteq\Delta(\mathcal{X})$ specified by
constraints (for example, moment or marginal constraints), the classical
maximum entropy formulation \cite{Jaynes1957} finds and acts based on the entropy-maximizing distribution $P^*$:
\begin{equation}
P^\star \;\in\; \arg\max_{P\in\mathcal{P}} H(P).
\label{eq:maxent-classical}
\end{equation}
%Throughout the paper, $\log$ denotes the natural logarithm and all entropy quantities are measured in nats.

\subsection{Density Matrices and Matrix Entropies}
\label{subsec:prelim:matrix}
Let $\mathbb{S}^n_+$ denote the cone of $n\times n$ real symmetric positive
semidefinite matrices. A \emph{density matrix} is a matrix
$\rho\in\mathbb{S}^n_+$ satisfying
\begin{equation}
\rho \succeq 0,
\qquad
\mathrm{Tr}\bigl(\rho\bigr)=1.
\label{eq:density-matrix}
\end{equation}
Let $\lambda(\rho)=\bigl(\lambda_1(\rho),\ldots,\lambda_n(\rho)\bigr)$ denote the
eigenvalues of $\rho$ (counted with multiplicity). Since
$\rho\succeq 0$ and $\mathrm{Tr}\bigl(\rho\bigr)=\sum_i \lambda_i(\rho)=1$, we have
$\lambda(\rho)\in\Delta_n$, where $\Delta_n \;:=\; \bigl\{p\in\mathbb{R}^n_+ : \sum_{i=1}^n p_i = 1 \bigr\}$. The \emph{von Neumann entropy} (VNE) of $\rho$ is defined as
\begin{equation}
S(\rho)\;:=\;-\mathrm{Tr}\!\bigl(\rho \log \rho\bigr)
\;=\;-\sum_{i=1}^n \lambda_i(\rho)\log \lambda_i(\rho),
\label{eq:vne}
\end{equation}
where $\log \rho$ is defined via functional calculus on the eigenvalues.

More generally, for $\alpha>0$ with $\alpha\neq 1$, the order-$\alpha$ (quantum)
R\'enyi entropy is defined by
\begin{equation}
S_\alpha(\rho)\;:=\;\frac{1}{1-\alpha}\log \mathrm{Tr}\!\bigl(\rho^\alpha\bigr)
\;=\;\frac{1}{1-\alpha}\log \sum_{i=1}^n \lambda_i(\rho)^\alpha.
\label{eq:renyi}
\end{equation}
A special case will be used in this paper. The quadratic (order-$2$) R\'enyi
entropy is
\begin{equation}
S_2(\rho)\;=\;-\log \mathrm{Tr}\bigl(\rho^2\bigr),
\label{eq:renyi-2}
\end{equation}
%and the order-$\infty$ limit equals \begin{equation} S_\infty(\rho):=\lim_{\alpha\to\infty} S_\alpha(\rho)=-\log \|\rho\|_{\mathrm{op}}=-\log\Bigl(\max_i \lambda_i(\rho)\Bigr),\label{eq:renyi-inf}\end{equation}where $\|\cdot\|_{\mathrm{op}}$ denotes the spectral (operator) norm.

\subsection{Kernel-Induced Density Matrices and Kernel Covariance Operators}
\label{subsec:prelim:kernel}
Let $x_1,\ldots,x_n\in\mathcal{X}$ be data points and let
$k:\mathcal{X}\times\mathcal{X}\to\mathbb{R}$ be a positive semidefinite kernel
associated with a reproducing kernel Hilbert space (RKHS) $\mathcal{H}$ and
feature map $\phi:\mathcal{X}\to\mathcal{H}$. The associated kernel (Gram)
matrix is $K\in\mathbb{R}^{n\times n}$ with entries
\begin{equation}
K_{ij}\;:=\;k\bigl(x_i,x_j\bigr).
\label{eq:kernel-matrix}
\end{equation}
By positive semidefiniteness of $k$, we have $K\succeq 0$. We call a kernel function $k$ normalized if $k(x,x)=1$ for every $x\in\mathcal{X}$. Note that for every kernel function $k$, whether normalized or not, the normalized function $\widetilde{k}(x,y) = k(x,y)/\sqrt{k(x,x)k(y,y)}$ will be a valid normalized kernel function. Given the above definitions, we associate to $K$ the trace-normalized \emph{kernel-induced density matrix}
\begin{equation}
\rho_K \;:=\; \frac{K}{\mathrm{Tr}\bigl(K\bigr)},
\label{eq:kernel-density}
\end{equation}
where $\mathrm{Tr}\bigl(K\bigr)=n$ for the supposed normalized kernel function. Therefore, $\rho_K\succeq 0$ and
$\mathrm{Tr}\bigl(\rho_K\bigr)=1$ by default hold, and thus $\rho_K$ is a density matrix and the
matrix entropies defined in \eqref{eq:vne}--\eqref{eq:renyi-2} apply directly.
In particular, $S\bigl(\rho_K\bigr)$ provides a spectral notion of diversity or
effective dimensionality induced by the kernel similarity structure.

Equivalently, one may work at the operator level. The (uncentered) kernel
covariance operator $\mathcal{C}:\mathcal{H}\to\mathcal{H}$ associated with the
empirical distribution of $\{x_i\}_{i=1}^n$ is defined as
\begin{equation}
\mathcal{C}
\;:=\;
\frac{1}{n}\sum_{i=1}^n \phi\bigl(x_i\bigr)\otimes \phi\bigl(x_i\bigr),
\label{eq:kernel-cov-operator}
\end{equation}
where $\phi(x)\otimes\phi(x)$ denotes the rank-one operator
$h\mapsto \langle h,\phi(x)\rangle_{\mathcal{H}}\phi(x)$.
After trace normalization,
$\rho_{\mathcal{C}}:=\mathcal{C}/\mathrm{Tr}\bigl(\mathcal{C}\bigr)$
defines a density operator on $\mathcal{H}$.
The nonzero eigenvalues of $\rho_{\mathcal{C}}$ coincide with those of
$\rho_K$ (up to scaling), so von Neumann and R\'enyi entropies can be defined
consistently at either the matrix or operator level.

\section{Maximum von Neumann Entropy Principle}
\label{sec:maxvne}

\subsection{Game-Theoretic Setup and Generalized Entropy}
\label{subsec:setup}

We adopt a game-theoretic formulation following the maximum entropy principle  and its minimax interpretation developed for the classical entropy setting in \cite{GrunwaldDawid2004}. According to the problem formulation, 
Nature selects an unknown state $\rho$ from a feasible set $\Gamma$, while a
decision maker (DM) selects an action $q$ from an action set $\mathcal{Q}$ and
incurs a loss $L(\rho,q)$.

Throughout this section, we assume that $\Gamma$ is a nonempty, convex, and
compact subset of a finite-dimensional real \emph{Euclidean} space $(\mathsf{E},\langle\cdot,\cdot\rangle)$,
and that $\mathcal{Q}$ is a nonempty, convex, and compact subset of a
finite-dimensional Euclidean space $(\mathsf{F},\langle\cdot,\cdot\rangle)$.
We consider losses $L:\Gamma\times\mathcal{Q}\to\mathbb{R}$ that are continuous,
\emph{affine in $\rho$ on $\Gamma$} and \emph{convex in $q$ on $\mathcal{Q}$}.
In particular, the setting includes expected-loss models of the form
\begin{equation}
L(\rho,q)
=
\bigl\langle \rho,\, \ell(q)\bigr\rangle + c(q),
\label{eq:expected-loss-form}
\end{equation}
for continuous maps $\ell:\mathcal{Q}\to\mathsf{E}$ and $c:\mathcal{Q}\to\mathbb{R}$.

Given a state $\rho$, the minimal achievable loss defines the \emph{generalized
entropy} (Bayes risk)
\begin{equation}
H(\rho)
\;:=\;
\inf_{q \in \mathcal{Q}} L(\rho,q).
\label{eq:generalized-entropy}
\end{equation}
Note that the corresponding minimax value in this game theoretic setting is
\begin{equation}
V_\Gamma
\;:=\;
\inf_{q \in \mathcal{Q}} \ \sup_{\rho \in \Gamma} L(\rho,q).
\label{eq:minimax-value}
\end{equation}

%\subsection{Maximum Entropy Principle}\label{subsec:maxent}

\begin{definition}[Maximum Entropy Principle]
Given a feasible set $\Gamma$, the maximum (matrix-based) entropy principle selects
\begin{equation}
\rho^\star \in \arg\max_{\rho \in \Gamma} H(\rho),
\label{eq:maxent-def}
\end{equation}
where $H(\rho)$ is the generalized entropy in \eqref{eq:generalized-entropy}.
\end{definition}

\noindent
The definition is agnostic to the  entropy functional as different losses
induce different generalized entropies through \eqref{eq:generalized-entropy}.

\subsection{Minimax Interpretation of the Maximum Entropy Principle}
\label{subsec:minimax}

We now show that, under the assumptions discussed in the previous subsection, the maximum entropy principle
admits a precise minimax interpretation. In particular, the maximum generalized
entropy state corresponds to Nature's equilibrium strategy in a minimax game,
while the associated action is a robust Bayes decision rule.

\begin{theorem}[Linear-in-State Minimax Theorem]
\label{thm:general-minimax}
Let $\Gamma\subseteq\mathsf{E}$ and $\mathcal{Q}\subseteq\mathsf{F}$ be nonempty,
convex, and compact, and let $L:\Gamma\times\mathcal{Q}\to\mathbb{R}$ be
continuous, affine in $\rho$ on $\Gamma$, and convex in $q$ on $\mathcal{Q}$.
Define $H(\rho)$ and $V_\Gamma$ as in
\eqref{eq:generalized-entropy}--\eqref{eq:minimax-value}. Then:
\begin{enumerate}[leftmargin=*]
\item The minimax equality holds:
\begin{equation}
\inf_{q \in \mathcal{Q}} \ \sup_{\rho \in \Gamma} L(\rho,q)
\;=\;
\sup_{\rho \in \Gamma} \ \inf_{q \in \mathcal{Q}} L(\rho,q).
\label{eq:minimax-equality}
\end{equation}
\item There exists a saddle point $(\rho^\star,q^\star) \in \Gamma \times \mathcal{Q}$
such that
\begin{equation}
L(\rho,q^\star)
\;\le\;
L(\rho^\star,q^\star)
\;\le\;
L(\rho^\star,q),
\qquad
\forall\, \rho \in \Gamma,\ \forall\, q \in \mathcal{Q}.
\label{eq:saddle}
\end{equation}
\item The saddle-point state maximizes generalized entropy and the action is minimax-robust:
\begin{equation}
H(\rho^\star)
\;=\;
\sup_{\rho \in \Gamma} H(\rho)
\;=\;
V_\Gamma
\;=\;
L(\rho^\star,q^\star).
\label{eq:maxent-minimax}
\end{equation}
\end{enumerate}
\end{theorem}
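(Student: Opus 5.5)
The plan is to reduce everything to a classical minimax theorem (Sion's, or von Neumann--Fan for compact convex sets) and then read off the saddle-point and maximum-entropy statements from elementary properties of the two value functions.

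\emph{Step 1 (minimax equality).} The function $L$ is continuous on $\Gamma\times\mathcal{Q}$. For each fixed $q$ it is affine in $\rho$, hence concave and upper semicontinuous in $\rho$. For each fixed $\rho$ it is convex and lower semicontinuous in $q$. Both $\Gamma$ and $\mathcal{Q}$ are nonempty, convex and compact subsets of finite-dimensional Euclidean spaces. These are exactly the hypotheses of Sion's minimax theorem, which I will invoke directly to obtain \eqref{eq:minimax-equality}. The weak inequality $\sup_\rho\inf_q L\le \inf_q\sup_\rho L$ is immediate anyway; the content is the reverse inequality, and Sion supplies it. I regard this invocation as the only substantive step. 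If a self-contained argument were wanted instead, I would pass through a finite-intersection and separating-hyperplane argument: suppose $\sup_\rho \inf_q L < c < \inf_q\sup_\rho L$, then use compactness of $\mathcal{Q}$ to extract finitely many states, separate the corresponding convex image in $\mathbb{R}^m$, and use affinity in $\rho$ to collapse a convex combination of those states into a single state in $\Gamma$, contradicting the first inequality. I would prefer simply citing Sion.

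\emph{Step 2 (attainment and saddle point).} Define $H(\rho)=\inf_q L(\rho,q)$ and $G(q)=\sup_\rho L(\rho,q)$.
\begin{itemize}
\item $H$ is an infimum of functions continuous in $\rho$, so it is upper semicontinuous. By compactness of $\Gamma$ it attains its maximum at some $\rho^\star$.
\item $G$ is a supremum of functions continuous in $q$, so it is lower semicontinuous. By compactness of $\mathcal{Q}$ it attains its minimum at some $q^\star$.
\end{itemize}
Using Step 1, and writing $V:=V_\Gamma$,
\[
H(\rho^\star)=V=G(q^\star).
\]
Chaining the definitions gives
\[
V=H(\rho^\star)\le L(\rho^\star,q^\star)\le G(q^\star)=V,
\]
so all of these quantities are equal. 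Then $L(\rho,q^\star)\le G(q^\star)=L(\rho^\star,q^\star)$ for all $\rho\in\Gamma$. Likewise $L(\rho^\star,q)\ge H(\rho^\star)=L(\rho^\star,q^\star)$ for all $q\in\mathcal{Q}$. This is exactly \eqref{eq:saddle}.

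\emph{Step 3 (maximum entropy and robustness).} Item 3 follows directly from the equalities in Step 2: $H(\rho^\star)=\sup_\rho H(\rho)=V_\Gamma=L(\rho^\star,q^\star)$. In addition, $q^\star$ attains the outer infimum in $V_\Gamma$, so it is minimax-robust.

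As already noted, the main obstacle is only Step 1. The remaining steps are routine semicontinuity and compactness bookkeeping.
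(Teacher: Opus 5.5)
Your proof is correct, but it runs in the opposite direction from the paper's. The paper never invokes a minimax theorem. It applies Kakutani's fixed-point theorem to the joint best-response correspondence $T(\rho,q)=\mathrm{BR}_\Gamma(q)\times\mathrm{BR}_{\mathcal{Q}}(\rho)$ on $\Gamma\times\mathcal{Q}$: the values are convex by the affinity/convexity of $L$, and the graph is closed by joint continuity. The saddle point therefore comes first, and the minimax equality and item~3 are read off from it. You instead obtain the equality from Sion's theorem, and then build the saddle point by attaining $\max_\rho H$ and $\min_q \sup_\rho L$ via semicontinuity and compactness.

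Your route has two advantages. First, it needs only separate semicontinuity of $L$, whereas the paper's closed-graph step uses joint continuity. Second, it gives a slightly stronger conclusion for free: \emph{any} maximizer of $H$ paired with \emph{any} minimizer of $q\mapsto\sup_\rho L(\rho,q)$ is a saddle point. That is exactly what Corollary~\ref{cor:maxvne} needs when it asserts that the Max-VNE solution is Nature's equilibrium state; from the paper's single Kakutani fixed point, this takes one extra line.

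The paper's fixed-point argument, in turn, is the standard Nash-existence proof and does not depend on the zero-sum structure. Your fallback sketch (finite subcover plus separating hyperplane, then collapsing $\sum_i\lambda_i\rho_i$ into a single state) is also valid. It is the one place where affinity in $\rho$, rather than mere concavity, is used in an essential way.
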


\begin{proof}
The proof is deferred to the Appendix.
\end{proof}

\subsection{Specialization to von Neumann Entropy}
\label{subsec:vne}

We now specialize the framework to quantum states.
Let $\mathcal{D}_n:=\{\rho\succeq 0:\mathrm{Tr}(\rho)=1\}$ denote the set of
$n\times n$ density matrices. For technical convenience in the main text, we
work with the compact subset of strictly positive density matrices
\begin{equation}
\mathcal{D}_{n,\varepsilon}
\;:=\;
\Bigl\{\rho\in\mathcal{D}_n:\ \rho\succeq \varepsilon I\Bigr\},
\label{eq:D-eps}
\end{equation}
for a fixed $\varepsilon\in(0,1)$. We assume $\Gamma\subseteq\mathcal{D}_{n,\varepsilon}$
is convex and compact, and the DM action set is $\mathcal{Q}=\mathcal{D}_{n,\varepsilon}$.

Consider the \emph{quantum log loss}
\begin{equation}
L_{\log}(\rho,\sigma)
\;:=\;
-\mathrm{Tr}\bigl(\rho \log \sigma\bigr),
\qquad
\rho\in\Gamma,\ \sigma\in\mathcal{D}_{n,\varepsilon},
\label{eq:log-loss}
\end{equation}
which is well-defined and continuous on $\Gamma\times\mathcal{D}_{n,\varepsilon}$. In Appendix~B, we observe that the Bayes risk induced by
$L_{\log}$ coincides with the von Neumann entropy of every $\rho\in\Gamma$
\begin{equation}
\inf_{\sigma \in \mathcal{D}_{n,\varepsilon}} L_{\log}(\rho,\sigma)
\;=\;
S(\rho)
\;:=\;
-\mathrm{Tr}\bigl(\rho \log \rho\bigr).
\label{eq:vne-bayes}
\end{equation}

\begin{corollary}[Max-VNE as a Minimax Equilibrium]
\label{cor:maxvne}
Applying Theorem~\ref{thm:general-minimax} with the log loss $L_{\log}$, the
maximum von Neumann entropy principle
\begin{equation}
\rho^\star \in \arg\max_{\rho \in \Gamma} S(\rho)
\end{equation}
selects Nature's equilibrium state in the minimax game. Moreover, the
corresponding optimal action $\sigma^\star$ is a minimax-robust Bayes decision rule.
\end{corollary}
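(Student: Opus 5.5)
The plan is to check the hypotheses of Theorem~\ref{thm:general-minimax} for $\Gamma\subseteq\mathcal{D}_{n,\varepsilon}$, $\mathcal{Q}=\mathcal{D}_{n,\varepsilon}$ and $L=L_{\log}$, and then read off the conclusions using \eqref{eq:vne-bayes}. The ambient spaces are $\mathsf{E}=\mathsf{F}=\mathbb{S}^n$ with the Frobenius inner product $\langle A,B\rangle=\mathrm{Tr}(AB)$.

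First, the set $\mathcal{D}_{n,\varepsilon}$ is nonempty as long as $n\varepsilon\le 1$, since then $I/n\succeq\varepsilon I$; this is the implicit regime of the setup. It is convex because it is the intersection of the affine hyperplane $\{\mathrm{Tr}\rho=1\}$ with the closed convex cone shifted by $\varepsilon I$. It is compact because it is closed and bounded, with $0\preceq\rho\preceq I$. The set $\Gamma$ is nonempty, convex and compact by assumption.

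Second, I verify the properties of the loss. Continuity follows because on $\mathcal{D}_{n,\varepsilon}$ the spectrum lies in $[\varepsilon,1]$, where $\log$ is continuous. Hence $\sigma\mapsto\log\sigma$ is continuous by functional calculus, and $L_{\log}(\rho,\sigma)=\langle\rho,-\log\sigma\rangle$ is jointly continuous. This also places $L_{\log}$ in the expected-loss form \eqref{eq:expected-loss-form}, with $\ell(\sigma)=-\log\sigma$ and $c\equiv 0$, so it is affine (in fact linear) in $\rho$.

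The main obstacle is convexity in $\sigma$, which is operator concavity of the logarithm. For any fixed $\rho\succeq 0$, the map $\sigma\mapsto\mathrm{Tr}(\rho\log\sigma)$ is concave. This holds because $\log$ is operator concave on positive definite matrices, that is,
\[
\log\bigl(t\sigma_1+(1-t)\sigma_2\bigr)\succeq t\log\sigma_1+(1-t)\log\sigma_2 ,
\]
which is the Löwner--Heinz theorem; one can also obtain it from the integral representation $\log x=\int_0^\infty\bigl(\tfrac{1}{1+s}-\tfrac{1}{x+s}\bigr)\,ds$ together with operator convexity of $x\mapsto (x+s)^{-1}$. Pairing this operator inequality with $\rho\succeq 0$ preserves the inequality, because $\mathrm{Tr}(\rho A)\ge 0$ for $A\succeq 0$. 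Therefore $L_{\log}$ is convex in $\sigma$.

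With all hypotheses verified, Theorem~\ref{thm:general-minimax} gives a saddle point $(\rho^\star,\sigma^\star)$. It also gives $H(\rho^\star)=\sup_{\rho\in\Gamma}H(\rho)=V_\Gamma$, and $\sigma^\star$ attains $\inf_\sigma\sup_\rho L_{\log}$, so $\sigma^\star$ is minimax-robust. By \eqref{eq:vne-bayes}, $H(\rho)=S(\rho)$ on $\Gamma$. This identity uses Klein's inequality $-\mathrm{Tr}(\rho\log\sigma)-S(\rho)=D(\rho\|\sigma)\ge 0$, with equality at $\sigma=\rho\in\mathcal{D}_{n,\varepsilon}$, and this equality case is where $\Gamma\subseteq\mathcal{D}_{n,\varepsilon}$ is needed. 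Hence $\rho^\star\in\arg\max_\Gamma S$.

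Conversely, I want every maximizer to be an equilibrium state. Take any $\rho^\star\in\arg\max_\Gamma S$. The saddle inequality gives $\sup_\rho L_{\log}(\rho,\sigma^\star)=V_\Gamma=S(\rho^\star)\le L_{\log}(\rho^\star,\sigma^\star)$. Since also $L_{\log}(\rho^\star,\sigma^\star)\le\sup_\rho L_{\log}(\rho,\sigma^\star)$, this forces $L_{\log}(\rho^\star,\sigma^\star)=S(\rho^\star)$, so $D(\rho^\star\|\sigma^\star)=0$. Thus $\sigma^\star=\rho^\star$, and $(\rho^\star,\rho^\star)$ is a saddle point. This shows the DM's robust action coincides with the Max-VNE state.
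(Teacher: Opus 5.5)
Your proposal is correct and follows essentially the same route as the paper's appendix argument: you verify that $L_{\log}$ is continuous, linear in $\rho$, and convex in $\sigma$ via operator concavity of $\log$, identify the Bayes risk with $S(\rho)$ through $L_{\log}=S+D$ and Klein's inequality, and invoke Theorem~\ref{thm:general-minimax}. You go slightly beyond what the paper writes, and soundly, in three places: the converse step showing that every maximizer $\rho^\star$ of $S$ yields the saddle point $(\rho^\star,\rho^\star)$ (so $\sigma^\star=\rho^\star$), the condition $n\varepsilon\le 1$ for nonemptiness, and the requirement $\Gamma\subseteq\mathcal{D}_{n,\varepsilon}$ for the Bayes-risk identity.
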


\begin{proof}
The proof is deferred to the Appendix.
\end{proof}

\subsection{Beyond von Neumann entropy: trace entropies and Bregman-type decompositions.}
The log loss in \eqref{eq:log-loss} admits the classical decomposition
\begin{equation}
L_{\log}(\rho,\sigma)=S(\rho)+D\bigl(\rho\Vert\sigma\bigr),
\label{eq:logloss-decomp-main}
\end{equation}
where $D(\rho\Vert\sigma)$ is the quantum relative entropy. This ``entropy plus divergence''
form is not unique to $S(\rho)$: a broad family of \emph{trace entropies}
can be obtained as Bayes risks of losses whose regret is a matrix Bregman divergence.

Concretely, let $f:(0,\infty)\to\mathbb{R}$ be convex and differentiable, and define the
trace entropy functional
\begin{equation}
H_f(\rho)
\;:=\;
-\mathrm{Tr}\bigl(f(\rho)\bigr),
\qquad \rho\in\mathcal{D}_{n,\varepsilon}.
\label{eq:Hf-main}
\end{equation}
For the associated ``$f$-loss'' $L_f(\rho,\sigma)$ defined in Appendix~C, one has the
exact regret decomposition
\begin{equation}
L_f(\rho,\sigma)
=
H_f(\rho) + D_f\bigl(\rho\Vert\sigma\bigr),
\label{eq:f-decomp-main}
\end{equation}
where $D_f(\rho\Vert\sigma)\ge 0$ is the matrix Bregman divergence generated by $f$.
In particular, $f(t)=t\log t$ recovers \eqref{eq:logloss-decomp-main},
while $f(t)=t^2$ gives $H_f(\rho)=-\mathrm{Tr}(\rho^2)$, which is equivalent to
R\'enyi-$2$ entropy through the monotone transform $S_2(\rho)=-\log \mathrm{Tr}(\rho^2)$. In the Appendix, we discuss the special cases.

\iffalse
More generally, for $\alpha>0$, R\'enyi-$\alpha$ entropy can be written as a monotone transform
of $\mathrm{Tr}\bigl(\rho^\alpha\bigr)$, and maximizing $S_\alpha(\rho)$ is equivalent to
optimizing $\mathrm{Tr}\bigl(\rho^\alpha\bigr)$ with the appropriate sign depending on $\alpha$.
Appendix~C formalizes the $f$-entropy/Bregman viewpoint and records sufficient conditions under
which the induced loss is convex in the action, allowing Theorem~\ref{thm:general-minimax} to
apply verbatim.
\fi

\section{Applications to Machine Learning}
\label{sec:applications}

We now illustrate how the Maximum von Neumann Entropy (Max-VNE) principle developed
in Section~\ref{sec:maxvne} applies to two representative machine learning problems
involving kernel-based representations under partial information.
Throughout this section, we assume a normalized positive semidefinite kernel
$k$ satisfying $k(x,x)=1$ for all input $x\in\mathcal{X}$.
Under this assumption, kernel covariance operators and kernel matrices admit a
natural interpretation as density matrices, without requiring additional trace
normalization.
%In high-dimensional learning problems, such kernel-induced operators provide a spectral representation whose eigen-directions capture latent structure in the data, making them a natural domain for entropy-based inference.

\subsection{Selecting a Kernel Mixture via Max-VNE}
\label{subsec:mixture}

In many learning settings, data can be represented through multiple normalized
embeddings, each inducing a kernel matrix.
Rather than fixing a single representation, we consider a constrained class of
convex kernel mixtures and apply the maximum von Neumann entropy (Max-VNE)
principle to \emph{select} a least-committal mixture consistent with prior
constraints.

To address this, consider \(M\) normalized embedding maps
\(
\phi_i:\mathcal{X}\to\mathbb{R}^{d_i}
\),
each inducing a kernel (i.e., inner product):
\begin{equation}
k_i(x,x')
\;:=\;
\bigl\langle \phi_i(x), \phi_i(x') \bigr\rangle,
\;\;
k_i(x,x)=1,\ \forall x\in\mathcal{X}.
\label{eq:base-kernels}
\end{equation}
Given samples \(x_1,\dots,x_n\), let \(K_i\in\mathbb{R}^{n\times n}\) denote the
corresponding kernel matrices, so that \(\mathrm{Tr}(K_i)=n\).

We form a combined representation via weighted concatenation.
For weights \(\alpha=[\alpha_1,\dots,\alpha_M]\in\Delta_M\), define
\begin{equation}
\phi_\alpha(x)
\;:=\;
\bigl[\sqrt{\alpha_1}\phi_1(x),\dots,\sqrt{\alpha_M}\phi_M(x)\bigr],
\label{eq:concat-embedding}
\end{equation}
which satisfies \(\|\phi_\alpha(x)\|_2=1\) for all \(x\).
The induced kernel is
\begin{equation}
k_\alpha(x,x')
\;=\;
\bigl\langle \phi_\alpha(x),\phi_\alpha(x') \bigr\rangle
\;=\;
\sum_{i=1}^M \alpha_i\, k_i(x,x').
\label{eq:kernel-mixture}
\end{equation}
Accordingly, the empirical kernel matrix is the convex combination with weights $[\alpha_1,\ldots ,\alpha_M]$
\begin{equation}
K(\alpha)
\;:=\;
\sum_{i=1}^M \alpha_i\, K_i,
\qquad
\alpha\in\mathcal{A}\subseteq\Delta_M,
\label{eq:gram-mixture}
\end{equation}
where \(\mathcal{A}\) is a convex set and a subset of the probability simplex $\Delta_M$.
Since \(\mathrm{Tr}(K(\alpha))=n\), the normalized matrix will be
\begin{equation}
\rho(\alpha)
\;:=\;
\frac{1}{n}\,K(\alpha)
\label{eq:rho-alpha}
\end{equation}
which is positive semidefinite with unit trace and can be viewed as a density matrix. The mixture class \eqref{eq:gram-mixture} induces the ambiguity set
\begin{equation}
\Gamma_{\mathrm{mix}}
:=
\bigl\{\rho(\alpha): \alpha\in\mathcal{A}\bigr\}.
\label{eq:Gamma-mix-new}
\end{equation}
Given only that the data representation lies in this class, we apply the Max-VNE
principle and select
\begin{equation}
\alpha^\star \in \arg\max_{\alpha\in\mathcal{A}} S\bigl(\rho(\alpha)\bigr),
\qquad
\rho^\star := \rho(\alpha^\star),
\label{eq:maxvne-kernel-mixture}
\end{equation}
where \(S(\cdot)\) denotes the von Neumann entropy. By the minimax interpretation in Section~\ref{sec:maxvne},
\(\rho^\star\) corresponds to Nature’s equilibrium density matrix within
\(\Gamma_{\mathrm{mix}}\).%, yielding a least-committal spectral representation consistent with the mixture constraints. The selected kernel can be used directly for downstream spectral analysis, including kernel PCA and similarity-based evaluation.

\subsection{Kernel Matrix and Covariance Completion via Max-VNE}
\label{subsec:completion}

We next consider the problem of completing a kernel matrix from partial observations.
Consider $x_1,\dots,x_n\in\mathcal{X}$ as data points to define the kernel matrix $K\in\mathbb{R}^{n\times n}$, and let
\begin{equation}
K_{ij} := k(x_i,x_j)
\end{equation}
denote the associated kernel matrix.
Under the normalization $k(x,x)=1$, we have $\mathrm{Tr}(K)=n$, and the scaled matrix
\begin{equation}
\rho := \frac{1}{n}K
\end{equation}
is a density matrix in $\mathcal{D}_n$.

Suppose that only a subset of kernel entries
\(
\{K_{ij}:(i,j)\in\Omega\}
\)
is observed, corresponding to linear constraints on $\rho$.
The set of density matrices consistent with the observations is
\begin{equation}
\Gamma_{\Omega}
:=
\Bigl\{
\rho\in\mathcal{D}_n :
\rho_{ij} = \tfrac{1}{n}K_{ij}
\ \text{for all } (i,j)\in\Omega
\Bigr\},
\label{eq:Gamma-omega}
\end{equation}
which is convex and compact. Applying the Max-VNE principle, we define the completed kernel matrix as
\begin{equation}
\rho^\star \in \arg\max_{\rho\in\Gamma_{\Omega}}\;  S(\rho).
\label{eq:maxvne-completion}
\end{equation}
The solution $\rho^\star$ is the maximum-entropy completion consistent with the
observed entries, assigning no additional spectral structure beyond what is
enforced by the constraints.
By Section~\ref{sec:maxvne}, $\rho^\star$ admits a minimax interpretation as the
robust equilibrium density matrix within $\Gamma_{\Omega}$.
The completed kernel matrix can then be used in downstream kernel methods such as
kernel PCA or spectral clustering.

\iffalse
\begin{remark}
In both applications, partial information about the data-generating process induces
an ambiguity set of kernel-induced density matrices.
The Max-VNE principle selects a canonical representative by maximizing entropy at
the level of kernel covariance operators or kernel matrices, whose eigenspaces
encode latent structure in high-dimensional data.
This matrix-based entropy perspective provides a principled alternative to applying
maximum entropy directly in the raw input space and leads to robust spectral
representations suitable for further analysis.
In the next section, we illustrate these ideas with numerical experiments on
synthetic mixtures and real datasets.
\end{remark}
\fi5-theory

\section{Numerical Results}

\subsection{Selecting Mixtures of Pretrained Embeddings via Max-VNE}

We evaluated the learned representation by the Maximum VNE principle on four standard image classification benchmarks: ImageNet~\cite{deng2009imagenet}, CIFAR-100~\cite{krizhevsky2009learning}, Describable Textures Dataset (DTD)~\cite{cimpoi2014describing}, and FGVC Aircraft~\cite{maji2013fine}.

\textbf{Models and Experimental Setup.}
Following the standard practice in the representation learning literature, we consider DINOv2~\cite{oquab2023dinov2}, SigLIP~\cite{zhai2023sigmoid}, UNICOM~\cite{an2023unicom}, and OpenCLIP~\cite{ilharco_gabriel_2021_5143773,radford2021learning} models.
For each pretrained embedding, we construct an RBF kernel and apply the Max-VNE principle to obtain the optimal mixture weights $\alpha^\star$ of the resulting kernel matrices. The learned weights are used to construct a mixed feature representation by scaling each embedding with its found weight and concatenation.

To evaluate whether the Max-VNE mixture yields a more informative representation for downstream tasks, we adopt a standard linear probing protocol. Specifically, it is performed using a single linear classifier trained on top of embeddings. Optimization is performed with Adam, a learning rate of $1\times10^{-3}$, batch size $256$, and $10$ training epochs. All experiments are conducted on two NVIDIA RTX~4090 GPUs.

\textbf{Results.}
Table~\ref{tab:lp_std} reports linear probing accuracy on all datasets. The Max-VNE mixed embedding outperforms every individual embedding over the four benchmark datasets.
\begin{figure}[t]
    \centering
    \includegraphics[width=\textwidth]{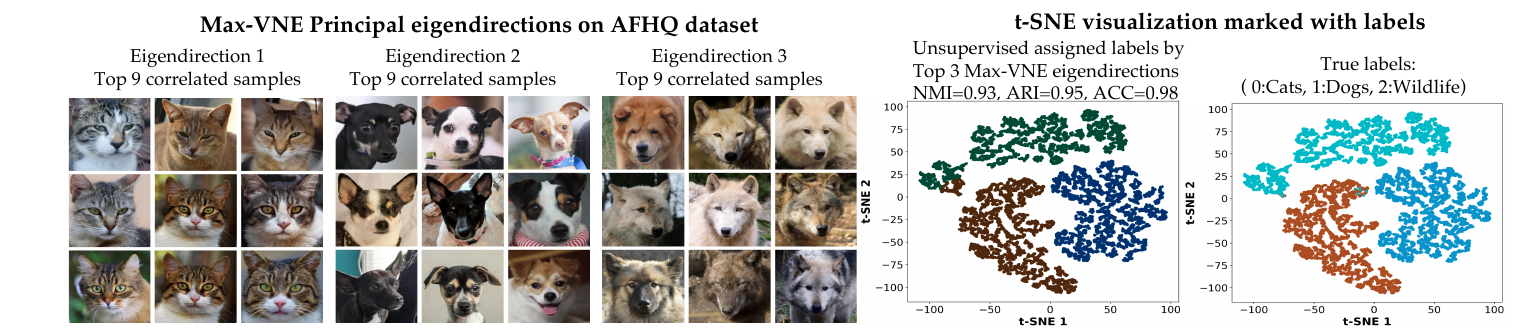}
    \caption{The experimental results of Max-VNE kernel completion from 10\% observed entries on the AFHQ dataset. The clustering performance evaluation of the recovered kernel matrix is performed using Normalized Mutual Information (NMI), Adjusted Rand Index (ARI), and Accuracy (ACC).}
    \label{fig:afhq_tsne}
\end{figure}

\begin{figure}
    \centering
    \includegraphics{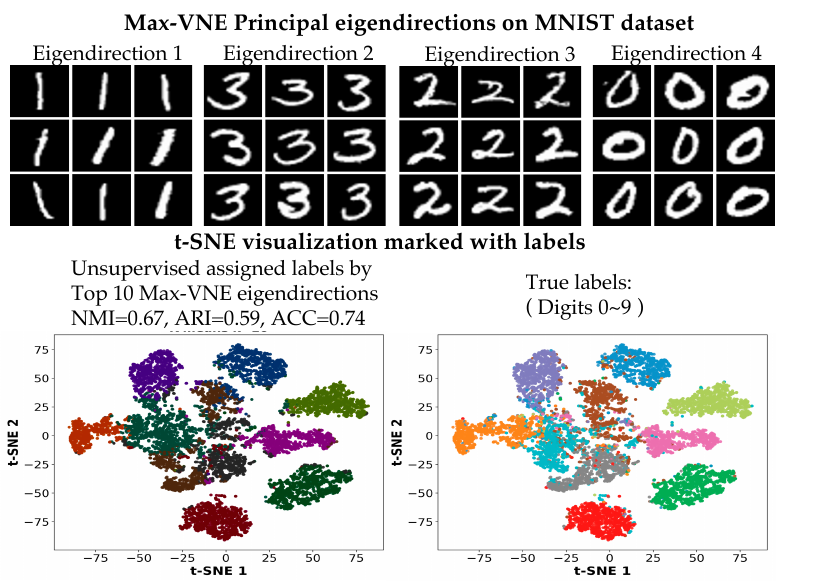}
    \caption{The experimental results of Fig.1. on the MNIST dataset.}
    \label{fig:mnist_tsne}
\end{figure}

\begin{table}[t]
\caption{linear probing accuracy of embeddings vs. Max-VNE-selected mixture of embeddings (\%, mean $\pm$ std over 5 runs).}
\label{tab:lp_std}
\centering
\begin{tabular}{lcccc}
\toprule

\textbf{Model} & \textbf{ImageNet} & \textbf{CIFAR100}& \textbf{DTD} & \textbf{Aircraft} \\
\midrule
OpenCLIP    &  $78.9 \pm 0.14$  & $86.3 \pm 0.02$ &  $77.7 \pm 0.23$ & $49.2 \pm 0.39$\\
SigLIP      &  $81.8 \pm 0.16$  & $83.3 \pm 0.04$ &  $80.3 \pm0.05$ &  $60.7 \pm 0.41$ \\
DINOv2      &  $80.0 \pm 0.19$  & $89.9 \pm 0.13$ &  $79.9 \pm0.22$ &  $58.9 \pm 0.63 $\\
UNICOM      &  $77.4 \pm 0.08$  & $85.0 \pm0.04 $ &  $71.6 \pm 0.53$ &  $55.4 \pm 0.85$\\
\midrule
Max-VNE       & $85.1 \pm 0.12$  & $91.5 \pm 0.06 $  &   $82.5 \pm 0.17 $  &  $65.1 \pm 0.48$  \\
\bottomrule
\end{tabular}
\vspace{-5mm}
\end{table}

\subsection{Kernel Matrix Completion via the Max-VNE Principle}
We conducted kernel completion experiments on three standard image datasets with different visual complexity: AFHQ~\cite{choi2020starganv2} (animal faces), MNIST~\cite{lecun1998mnist} (handwritten digits), and ImageNet-Dogs~\cite{Howard_Imagewoof_2019} (dog breeds from ImageNet).

\textbf{Experimental Setup.}
For each dataset, we construct a cosine similarity kernel matrix from the OpenCLIP~\cite{ilharco_gabriel_2021_5143773} image embeddings. To simulate partial observations, we randomly mask $90\%$ of the off-diagonal entries. Given the partially observed kernel, we apply the Max-VNE principle to recover a completed kernel matrix. Spectral clustering is then performed on the recovered kernel matrix.

\textbf{Results.}
Figures~\ref{fig:afhq_tsne},~\ref{fig:mnist_tsne} show the meaningful clustering results on the AFHQ and MNIST datasets. Quantitatively, 
we report NMI~\cite{mcdaid2013normalizedmutualinformationevaluate},
ARI~\cite{vinh2009information}, and clustering accuracy (ACC)~\cite{kuhn1955hungarian} in the figure. Overall, Max-VNE performs satisfactorily across datasets, indicating its effectiveness.  Results on the ImageNet-Dogs are included in the Appendix.

%\section{Conclusion}
%\input{sections/7-conclusion}

\newpage

\bibliographystyle{unsrt}
\bibliography{reference}

\clearpage
\newpage
\onecolumn
\begin{appendices}
\section{Proofs and Additional Theoretical Results}
%\appendices

\subsection{Proof of Theorem~\ref{thm:general-minimax}}
\label{app:general-minimax}

We prove Theorem~\ref{thm:general-minimax} under the stated assumptions. Throughout, $\Gamma$ and $\mathcal{Q}$ are nonempty, convex, and compact subsets
of finite-dimensional real vector spaces, and
$L:\Gamma\times\mathcal{Q}\to\mathbb{R}$ is continuous, affine in $\rho$, and
convex in $q$.

For each $q\in\mathcal{Q}$, define Nature’s best-response set
\[
\mathrm{BR}_{\Gamma}(q)
\;:=\;
\arg\max_{\rho\in\Gamma} L(\rho,q),
\]
and for each $\rho\in\Gamma$, define the decision maker’s best-response set
\[
\mathrm{BR}_{\mathcal{Q}}(\rho)
\;:=\;
\arg\min_{q\in\mathcal{Q}} L(\rho,q).
\]
By continuity of $L$ and compactness of $\Gamma$ and $\mathcal{Q}$, both sets are
nonempty and compact. Affinity of $L(\cdot,q)$ implies that
$\mathrm{BR}_{\Gamma}(q)$ is convex, and convexity of $L(\rho,\cdot)$ implies that
$\mathrm{BR}_{\mathcal{Q}}(\rho)$ is convex.

Consider the product set $K:=\Gamma\times\mathcal{Q}$ and the correspondence
$T:K\rightrightarrows K$ defined by
\[
T(\rho,q)
\;:=\;
\mathrm{BR}_{\Gamma}(q)\times \mathrm{BR}_{\mathcal{Q}}(\rho).
\]
The correspondence $T$ has nonempty, convex, and compact values.
Moreover, since $L$ is continuous, standard arguments show that the graph of $T$
is closed: if $(\rho_k,q_k)\to(\rho,q)$ and
$(\tilde\rho_k,\tilde q_k)\in T(\rho_k,q_k)$ with
$(\tilde\rho_k,\tilde q_k)\to(\tilde\rho,\tilde q)$, then continuity of $L$
implies $(\tilde\rho,\tilde q)\in T(\rho,q)$.
In finite-dimensional spaces, closed graph and compact values imply upper
hemicontinuity.

\begin{lemma}[Existence of a saddle point]
There exists $(\rho^\star,q^\star)\in\Gamma\times\mathcal{Q}$ such that
\[
\rho^\star\in \mathrm{BR}_{\Gamma}(q^\star)
\qquad\text{and}\qquad
q^\star\in \mathrm{BR}_{\mathcal{Q}}(\rho^\star).
\]
\end{lemma}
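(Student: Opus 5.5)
The plan is to apply Kakutani's fixed-point theorem to the correspondence $T:K\rightrightarrows K$ built just above, with $K=\Gamma\times\mathcal{Q}$. First I note that $K$ is a nonempty, convex, compact subset of the finite-dimensional Euclidean space $\mathsf{E}\times\mathsf{F}$, since it is a product of two such sets. The values $T(\rho,q)=\mathrm{BR}_{\Gamma}(q)\times\mathrm{BR}_{\mathcal{Q}}(\rho)$ are nonempty, convex, and compact, because each factor has these properties by the remarks preceding the lemma. Once $T$ is shown to have a closed graph, Kakutani's theorem gives a point $(\rho^\star,q^\star)\in T(\rho^\star,q^\star)$. Unpacking this inclusion yields exactly $\rho^\star\in\mathrm{BR}_{\Gamma}(q^\star)$ and $q^\star\in\mathrm{BR}_{\mathcal{Q}}(\rho^\star)$.

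The step needing care is the closed-graph property, which the text only sketches. Let $(\rho_k,q_k)\to(\rho,q)$ and let $(\tilde\rho_k,\tilde q_k)\in T(\rho_k,q_k)$ converge to $(\tilde\rho,\tilde q)$. Then for every $\rho'\in\Gamma$ we have $L(\tilde\rho_k,q_k)\ge L(\rho',q_k)$. Since $L$ is jointly continuous on $\Gamma\times\mathcal{Q}$, we may pass to the limit to get $L(\tilde\rho,q)\ge L(\rho',q)$. This holds for all $\rho'$, so $\tilde\rho\in\mathrm{BR}_\Gamma(q)$, and closedness of $\Gamma$ ensures $\tilde\rho\in\Gamma$. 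The same argument with $L(\rho_k,\tilde q_k)\le L(\rho_k,q')$ for all $q'\in\mathcal{Q}$ gives $\tilde q\in\mathrm{BR}_{\mathcal{Q}}(\rho)$. Hence the graph of $T$ is closed. Because $K$ is compact, this is equivalent to upper hemicontinuity with compact values, which is the form Kakutani's theorem requires.

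Convexity of the values is where the structural assumptions are used. $\mathrm{BR}_\Gamma(q)$ is the set of maximizers of the affine (hence concave) function $L(\cdot,q)$ over the convex set $\Gamma$, and the maximizer set of a concave function over a convex set is convex. $\mathrm{BR}_{\mathcal{Q}}(\rho)$ is the set of minimizers of the convex function $L(\rho,\cdot)$ over $\mathcal{Q}$, which is convex for the same reason. Nonemptiness follows from the Weierstrass theorem, since $L$ is continuous and both sets are compact.

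I expect no genuine obstacle. The only point to check is that Kakutani's theorem applies verbatim in the Euclidean space $\mathsf{E}\times\mathsf{F}$, which holds because it is isomorphic to some $\mathbb{R}^N$. The resulting fixed point is precisely a pair of mutual best responses, i.e.\ the saddle inequalities \eqref{eq:saddle}.
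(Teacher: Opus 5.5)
Your proposal is correct and follows the same route as the paper. Both apply Kakutani's fixed-point theorem to the best-response correspondence $T(\rho,q)=\mathrm{BR}_{\Gamma}(q)\times\mathrm{BR}_{\mathcal{Q}}(\rho)$ on $\Gamma\times\mathcal{Q}$, using nonempty convex compact values and a closed graph; your sequential argument for the closed graph, via joint continuity of $L$ and closedness of $\Gamma$ and $\mathcal{Q}$, fills in the step the paper only sketches.
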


\begin{proof}
Since $K$ is nonempty, compact, and convex, and $T$ is upper hemicontinuous with
nonempty, compact, and convex values, Kakutani’s fixed point theorem applies and
yields a fixed point $(\rho^\star,q^\star)\in K$ satisfying
$(\rho^\star,q^\star)\in T(\rho^\star,q^\star)$.
\end{proof}

We now complete the proof of Theorem~\ref{thm:general-minimax}.
From the fixed-point property,
$\rho^\star$ maximizes $L(\rho,q^\star)$ over $\Gamma$ and
$q^\star$ minimizes $L(\rho^\star,q)$ over $\mathcal{Q}$.
Equivalently,
\[
L(\rho,q^\star)
\;\le\;
L(\rho^\star,q^\star)
\;\le\;
L(\rho^\star,q)
\qquad
\forall\,\rho\in\Gamma,\ \forall\,q\in\mathcal{Q},
\]
which establishes the saddle-point property and proves item~2 of the theorem.

The minimax equality follows immediately.
Weak duality always gives
\[
\sup_{\rho\in\Gamma}\inf_{q\in\mathcal{Q}} L(\rho,q)
\;\le\;
\inf_{q\in\mathcal{Q}}\sup_{\rho\in\Gamma} L(\rho,q).
\]
Using the saddle inequalities,
\[
\sup_{\rho\in\Gamma} L(\rho,q^\star)
=
L(\rho^\star,q^\star)
=
\inf_{q\in\mathcal{Q}} L(\rho^\star,q),
\]
which implies equality and proves item~1.

Finally, define the generalized entropy
$H(\rho):=\inf_{q\in\mathcal{Q}} L(\rho,q)$.
Since $q^\star$ minimizes $L(\rho^\star,\cdot)$,
\[
H(\rho^\star)=L(\rho^\star,q^\star).
\]
For any $\rho\in\Gamma$,
\[
H(\rho)
=
\inf_{q}L(\rho,q)
\le
L(\rho,q^\star)
\le
L(\rho^\star,q^\star)
=
H(\rho^\star),
\]
so $\rho^\star$ maximizes $H(\rho)$ over $\Gamma$.
Moreover,
\[
V_\Gamma
=
\inf_{q}\sup_{\rho}L(\rho,q)
=
\sup_{\rho}L(\rho,q^\star)
=
L(\rho^\star,q^\star)
=
H(\rho^\star),
\]
which proves item~3 and completes the proof.

\subsection{Details for the von Neumann Entropy Specialization}
\label{app:vne}

This appendix provides details for Corollary~\ref{cor:maxvne}, verifying that the
quantum log loss satisfies the assumptions of Theorem~\ref{thm:general-minimax}
and that its induced generalized entropy coincides with the von Neumann entropy.

Let $\mathcal{D}_n:=\{\rho\succeq 0:\mathrm{Tr}(\rho)=1\}$ denote the set of
$n\times n$ density matrices, and let $\Gamma\subseteq\mathcal{D}_n$ be a convex
and compact feasible set. We consider the quantum log loss
\[
L_{\log}(\rho,\sigma)
\;:=\;
-\mathrm{Tr}\bigl(\rho\log\sigma\bigr),
\]
defined for $\sigma\in\mathcal{D}_n$ with $\sigma\succ 0$. For technical convenience,
we restrict the action set to
$\Sigma_\varepsilon:=\{\sigma\in\mathcal{D}_n:\sigma\succeq \varepsilon I\}$ for some
$\varepsilon>0$, which ensures that $\log\sigma$ is finite and continuous.

For each fixed $\sigma$, the map $\rho\mapsto L_{\log}(\rho,\sigma)$ is affine, since
the trace is linear. For each fixed $\rho$, the map
$\sigma\mapsto L_{\log}(\rho,\sigma)$ is convex on $\Sigma_\varepsilon$, because the
scalar function $x\mapsto -\log x$ is operator convex on $(0,\infty)$ and trace
preserves the L\"owner order when left-multiplied by $\rho\succeq 0$. Continuity of
$L_{\log}$ on $\Gamma\times\Sigma_\varepsilon$ follows from continuity of the matrix
logarithm on $\Sigma_\varepsilon$. Thus all assumptions of
Theorem~\ref{thm:general-minimax} are satisfied.

We now identify the generalized entropy induced by the log loss.
Define the von Neumann entropy
$S(\rho):=-\mathrm{Tr}\bigl(\rho\log\rho\bigr)$ and the quantum relative entropy
\[
D(\rho\Vert\sigma)
\;:=\;
\mathrm{Tr}\bigl(\rho(\log\rho-\log\sigma)\bigr),
\]
with the convention $D(\rho\Vert\sigma)=+\infty$ if
$\mathrm{supp}(\rho)\nsubseteq\mathrm{supp}(\sigma)$.
A direct calculation yields the decomposition
\[
L_{\log}(\rho,\sigma)
=
S(\rho)+D(\rho\Vert\sigma),
\]
whenever $D(\rho\Vert\sigma)$ is finite.

The quantum relative entropy is nonnegative by Klein’s inequality, with equality
if and only if $\rho=\sigma$. Consequently, for every $\rho\in\mathcal{D}_n$,
\[
\inf_{\sigma\in\mathcal{D}_n} L_{\log}(\rho,\sigma)
=
S(\rho),
\]
and the infimum is attained at $\sigma=\rho$.

\begin{proof}[Proof of Corollary~\ref{cor:maxvne}]
Applying Theorem~\ref{thm:general-minimax} to the game in which Nature selects
$\rho\in\Gamma$ and the decision maker selects $\sigma\in\Sigma_\varepsilon$ with
loss $L_{\log}(\rho,\sigma)$, we obtain a saddle point
$(\rho^\star,\sigma^\star)$ and minimax equality. Since the generalized entropy
induced by $L_{\log}$ coincides with the von Neumann entropy, the maximum generalized
entropy principle reduces to the maximum von Neumann entropy principle. The
corresponding action $\sigma^\star$ is minimax-robust.
\end{proof}

\subsection{Trace Entropies and an Entropy--Divergence Decomposition}
\label{app:trace-bregman}

This appendix records a general entropy plus divergence identity for a broad
family of trace entropies, and briefly relates it to R\'enyi-$\alpha$ objectives.

In the discussion, we fix $\varepsilon\in(0,1/n)$ and recall
$\mathcal{D}_{n,\varepsilon}:=\{\rho\in\mathcal{D}_n:\rho\succeq \varepsilon I\}$.
Let $f:(0,\infty)\to\mathbb{R}$ be convex and continuously differentiable.
For $\rho\in\mathcal{D}_{n,\varepsilon}$, define the trace-entropy functional
\begin{equation}
H_f(\rho)
\;:=\;
-\mathrm{Tr}\bigl(f(\rho)\bigr).
\label{eq:Hf-app}
\end{equation}
For $\rho,\sigma\in\mathcal{D}_{n,\varepsilon}$, define the associated matrix
Bregman divergence
\begin{equation}
D_f\bigl(\rho\Vert\sigma\bigr)
\;:=\;
\mathrm{Tr}\Bigl(f(\rho)-f(\sigma)-f'(\sigma)\bigl(\rho-\sigma\bigr)\Bigr).
\label{eq:bf-def}
\end{equation}
Nonnegativity follows from convexity of the spectral function
$F(\rho):=\mathrm{Tr}(f(\rho))$ on Hermitian matrices: for convex differentiable
$F$ one has
\[
F(\rho)\ge F(\sigma)+\mathrm{Tr}\bigl(f'(\sigma)(\rho-\sigma)\bigr),
\]
which is equivalent to $D_f(\rho\Vert\sigma)\ge 0$.

Next, define the corresponding loss
\begin{equation}
L_f(\rho,\sigma)
\;:=\;
-\mathrm{Tr}\Bigl(f(\sigma)+f'(\sigma)\bigl(\rho-\sigma\bigr)\Bigr),
\qquad
\rho,\sigma\in\mathcal{D}_{n,\varepsilon}.
\label{eq:Lf-def}
\end{equation}
For each fixed $\sigma$, the map $\rho\mapsto L_f(\rho,\sigma)$ is affine.

\begin{proposition}
\label{prop:entropy-bregman}
For all $\rho,\sigma\in\mathcal{D}_{n,\varepsilon}$, the following holds
\begin{equation}
L_f(\rho,\sigma)
=
H_f(\rho) + D_f\bigl(\rho\Vert\sigma\bigr).
\label{eq:Lf-decomp}
\end{equation}
Consequently, the following is true where the infimum is attained at $\sigma=\rho$:
\begin{equation}
\inf_{\sigma\in\mathcal{D}_{n,\varepsilon}} L_f(\rho,\sigma)
=
H_f(\rho)
\label{eq:Lf-bayes}
\end{equation}
\end{proposition}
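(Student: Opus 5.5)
The identity \eqref{eq:Lf-decomp} is pure algebra, so I will verify it first by expanding the right-hand side. Using \eqref{eq:Hf-app} and \eqref{eq:bf-def},
\[
H_f(\rho)+D_f(\rho\Vert\sigma)
= -\mathrm{Tr}\bigl(f(\rho)\bigr) + \mathrm{Tr}\bigl(f(\rho)\bigr) - \mathrm{Tr}\bigl(f(\sigma)\bigr) - \mathrm{Tr}\bigl(f'(\sigma)(\rho-\sigma)\bigr).
\]
The two $\mathrm{Tr}(f(\rho))$ terms cancel by linearity of the trace. What remains is exactly $-\mathrm{Tr}\bigl(f(\sigma)+f'(\sigma)(\rho-\sigma)\bigr)=L_f(\rho,\sigma)$ from \eqref{eq:Lf-def}.

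All terms are finite, for the following reason. Every $\rho,\sigma\in\mathcal{D}_{n,\varepsilon}$ has spectrum in $[\varepsilon,1]\subset(0,\infty)$. Hence $f(\rho)$, $f(\sigma)$ and $f'(\sigma)$ are well defined by functional calculus, and they are real symmetric matrices.

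For \eqref{eq:Lf-bayes}, I will use the nonnegativity $D_f(\rho\Vert\sigma)\ge 0$ recorded just before the proposition. With the identity, this gives $L_f(\rho,\sigma)\ge H_f(\rho)$ for every $\sigma\in\mathcal{D}_{n,\varepsilon}$, so the infimum is at least $H_f(\rho)$. The value $H_f(\rho)$ is attained at $\sigma=\rho$, which is an admissible action because $\rho\in\mathcal{D}_{n,\varepsilon}$: there $D_f(\rho\Vert\rho)=\mathrm{Tr}(f(\rho)-f(\rho)-0)=0$. So the infimum is a minimum and equals $H_f(\rho)$.

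The only step with real content is the nonnegativity of $D_f$, that is, the gradient inequality for $F(\rho)=\mathrm{Tr}(f(\rho))$. If I needed to justify it rather than cite it, I would do so in two steps.
\begin{enumerate}
\item Show that $F$ is convex on symmetric matrices with spectrum in $(0,\infty)$. This is the standard fact that traces of convex scalar functions are convex; Peierls' inequality or Klein's inequality gives it.
\item Compute the directional derivative $\tfrac{d}{dt}\mathrm{Tr}\,f(\sigma+tX)\big|_{t=0}=\mathrm{Tr}(f'(\sigma)X)$. This follows by Daleckii--Krein, or more simply for polynomials first and then by approximation, using that $f\in C^1$ on the compact interval $[\varepsilon,1]$.
\end{enumerate}
Convexity of $t\mapsto F(\sigma+t(\rho-\sigma))$ on $[0,1]$ then yields the tangent-line bound $F(\rho)\ge F(\sigma)+\mathrm{Tr}(f'(\sigma)(\rho-\sigma))$, which is equivalent to $D_f(\rho\Vert\sigma)\ge 0$.

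An alternative route is Klein's inequality applied directly in the eigenbases of $\rho=\sum_i\lambda_i u_iu_i^\top$ and $\sigma=\sum_j\mu_j v_jv_j^\top$. It gives
\[
D_f(\rho\Vert\sigma)=\sum_{i,j}|\langle u_i,v_j\rangle|^2\bigl(f(\lambda_i)-f(\mu_j)-f'(\mu_j)(\lambda_i-\mu_j)\bigr)\ge 0,
\]
where the decomposition uses $\sum_j|\langle u_i,v_j\rangle|^2=1=\sum_i|\langle u_i,v_j\rangle|^2$, and each summand is nonnegative by scalar convexity of $f$. I would present this second route, since it avoids differentiating matrix functions.
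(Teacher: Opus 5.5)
Your proposal is correct and matches the paper's proof. Both expand $H_f(\rho)+D_f(\rho\Vert\sigma)$ term by term to get $L_f(\rho,\sigma)$, then obtain the Bayes-risk identity from $D_f\ge 0$, $D_f(\rho\Vert\rho)=0$, and admissibility of $\sigma=\rho$. Your one addition is the explicit eigenbasis (Klein) proof of $D_f\ge 0$, which the paper only attributes to the gradient inequality for the convex function $F(\rho)=\mathrm{Tr}(f(\rho))$; your version is more self-contained because it never needs to identify the gradient of $F$ as $f'(\sigma)$.
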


\begin{proof}
By \eqref{eq:Hf-app}, \eqref{eq:bf-def}, and \eqref{eq:Lf-def},
\[
H_f(\rho)+D_f(\rho\Vert\sigma)
=
-\mathrm{Tr}\bigl(f(\rho)\bigr)
+
\mathrm{Tr}\Bigl(f(\rho)-f(\sigma)-f'(\sigma)\bigl(\rho-\sigma\bigr)\Bigr)
=
-\mathrm{Tr}\Bigl(f(\sigma)+f'(\sigma)\bigl(\rho-\sigma\bigr)\Bigr)
=
L_f(\rho,\sigma).
\]
The Bayes-risk identity \eqref{eq:Lf-bayes} follows from $D_f(\rho\Vert\sigma)\ge 0$,
with equality at $\sigma=\rho$.
\end{proof}

\paragraph{Two special cases.}
For $f(t)=t\log t$, we have
$H_f(\rho)=-\mathrm{Tr}\bigl(\rho\log\rho\bigr)=S(\rho)$ and
$D_f(\rho\Vert\sigma)=D(\rho\Vert\sigma)$ (quantum relative entropy), and
\eqref{eq:Lf-decomp} recovers the standard identity
$-\mathrm{Tr}(\rho\log\sigma)=S(\rho)+D(\rho\Vert\sigma)$.
For $f(t)=t^2$, we obtain $H_f(\rho)=-\mathrm{Tr}(\rho^2)$ and
$D_f(\rho\Vert\sigma)=\mathrm{Tr}\bigl((\rho-\sigma)^2\bigr)$, yielding the quadratic
(order-$2$) surrogate used in the main text.

\paragraph{Relation to R\'enyi-$\alpha$.}
For $\alpha>0$, $\alpha\neq 1$, the (Petz) R\'enyi-$\alpha$ entropy is
\begin{equation}
S_\alpha(\rho)
\;:=\;
\frac{1}{1-\alpha}\log \mathrm{Tr}\bigl(\rho^\alpha\bigr).
\label{eq:renyi}
\end{equation}
Since $\log(\cdot)$ is increasing and $\frac{1}{1-\alpha}$ changes sign at $\alpha=1$,
maximizing $S_\alpha(\rho)$ over a feasible set $\Gamma$ is equivalent to maximizing
$\mathrm{Tr}(\rho^\alpha)$ for $0<\alpha<1$ and to minimizing $\mathrm{Tr}(\rho^\alpha)$
for $\alpha>1$. Accordingly, one may view maximum R\'enyi-$\alpha$ selection through
a trace-power surrogate $H_f(\rho)=-\mathrm{Tr}(f(\rho))$ with $f(t)=t^\alpha$ (up to
a sign choice depending on $\alpha$), to which Proposition~\ref{prop:entropy-bregman}
applies.

\subsection{Linear Constraints and Equalizer Solutions}
\label{app:linear-constraints}

This appendix gives a simple sufficient condition under which the maximum
von Neumann entropy (Max-VNE) solution satisfies an \emph{equalizer} property,
i.e., the log loss against the selected action is constant over the feasible set.

Let $A_1,\dots,A_m\in\mathbb{S}^n$ and $\tau\in\mathbb{R}^m$, and define the affine
constraint set
\[
\Gamma_{\tau}
:=
\Bigl\{\rho\in\mathcal{D}_n:\ \mathrm{Tr}\bigl(\rho A_j\bigr)=\tau_j,\ j=1,\dots,m\Bigr\}.
\]

\begin{theorem}
\label{thm:gibbs-equalizer}
Assume $\Gamma_{\tau}\neq\emptyset$ and there exists a full-rank
$\rho_{\tau}\in\Gamma_{\tau}$ and scalars $c,\beta_1,\dots,\beta_m$ such that
\begin{equation}
\log \rho_{\tau}
=
cI+\sum_{j=1}^m \beta_j A_j.
\label{eq:gibbs-form}
\end{equation}
Then for the quantum log loss $L_{\log}(\rho,\sigma)=-\mathrm{Tr}\bigl(\rho\log\sigma\bigr)$:
\begin{enumerate}
\item  The value $L_{\log}(\rho,\rho_{\tau})$ is constant over all
$\rho\in\Gamma_{\tau}$.
\item $\rho_{\tau}\in\arg\max_{\rho\in\Gamma_{\tau}} S(\rho)$.
\item  $(\rho_{\tau},\rho_{\tau})$ is a saddle point of the minimax game
$\inf_{\sigma\in\mathcal{D}_n}\sup_{\rho\in\Gamma_{\tau}} L_{\log}(\rho,\sigma)$, and hence
$\rho_{\tau}$ is minimax-optimal.
\end{enumerate}
\end{theorem}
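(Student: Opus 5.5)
The plan is to exploit the Gibbs form \eqref{eq:gibbs-form} directly, since it makes the log loss against $\rho_\tau$ a linear function of the constraint values. For item 1, take any $\rho\in\Gamma_\tau$ and expand
\[
L_{\log}(\rho,\rho_\tau) = -\mathrm{Tr}\bigl(\rho\log\rho_\tau\bigr) = -c\,\mathrm{Tr}(\rho) - \sum_{j=1}^m \beta_j \mathrm{Tr}\bigl(\rho A_j\bigr) = -c - \sum_{j=1}^m \beta_j\tau_j .
\]
The last step uses $\mathrm{Tr}(\rho)=1$ and $\mathrm{Tr}(\rho A_j)=\tau_j$. The right-hand side does not depend on $\rho$, which is the equalizer property. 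Full rank of $\rho_\tau$ is what makes $\log\rho_\tau$ finite, so the expression is well defined. Evaluating at $\rho=\rho_\tau$ gives the constant $S(\rho_\tau)$.

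For item 2, I will use the decomposition $L_{\log}(\rho,\sigma)=S(\rho)+D(\rho\Vert\sigma)$ from Appendix~\ref{app:vne}. This decomposition is valid for every $\rho$ when $\sigma=\rho_\tau$ is full rank, since then $\mathrm{supp}(\rho)\subseteq\mathrm{supp}(\rho_\tau)$ trivially. For $\rho\in\Gamma_\tau$, Klein's inequality $D\ge 0$ together with item 1 gives
\[
S(\rho)=L_{\log}(\rho,\rho_\tau)-D(\rho\Vert\rho_\tau)\le L_{\log}(\rho_\tau,\rho_\tau)=S(\rho_\tau).
\]
Hence $\rho_\tau$ maximizes VNE over $\Gamma_\tau$. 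The equality case of Klein's inequality also shows the maximizer is unique, although the statement does not require this.

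For item 3, I will verify the two saddle inequalities \eqref{eq:saddle} with $(\rho^\star,q^\star)=(\rho_\tau,\rho_\tau)$ and action set $\mathcal{D}_n$.
\begin{itemize}
\item The left inequality, $L_{\log}(\rho,\rho_\tau)\le L_{\log}(\rho_\tau,\rho_\tau)$ for all $\rho\in\Gamma_\tau$, holds with equality by item 1.
\item The right inequality, $L_{\log}(\rho_\tau,\rho_\tau)\le L_{\log}(\rho_\tau,\sigma)$ for all $\sigma\in\mathcal{D}_n$, follows from $L_{\log}(\rho_\tau,\sigma)=S(\rho_\tau)+D(\rho_\tau\Vert\sigma)\ge S(\rho_\tau)$.
\end{itemize}
In the right inequality, singular $\sigma$ must be handled. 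If $\mathrm{supp}(\rho_\tau)=\mathbb{R}^n\not\subseteq\mathrm{supp}(\sigma)$, then $-\mathrm{Tr}(\rho_\tau\log\sigma)=+\infty$, using the extended-value convention consistent with $D=+\infty$, so the inequality holds trivially.

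With the saddle point established, the weak-duality argument in the proof of Theorem~\ref{thm:general-minimax} (Appendix~\ref{app:general-minimax}) gives minimax equality. That argument does not need compactness once a saddle point is exhibited. It shows that $\sigma=\rho_\tau$ attains $\inf_\sigma\sup_{\rho\in\Gamma_\tau}L_{\log}$, with value $S(\rho_\tau)$.

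The main obstacle is bookkeeping rather than substance: Theorem~\ref{thm:general-minimax} as stated requires a compact action set and finite losses, whereas here the action set is all of $\mathcal{D}_n$. I will therefore avoid invoking the theorem itself and argue directly from the saddle inequalities, treating losses against singular $\sigma$ as $+\infty$.
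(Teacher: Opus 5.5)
Your proof is correct and follows the paper's argument essentially step for step: the Gibbs form gives the equalizer constant $-(c+\sum_j\beta_j\tau_j)=S(\rho_\tau)$, the decomposition $L_{\log}=S+D$ with Klein's inequality gives entropy maximality, and $D(\rho_\tau\Vert\sigma)\ge 0$ gives the right saddle inequality. Your explicit $+\infty$ treatment of singular $\sigma$ and your uniqueness remark are small refinements that the paper leaves implicit.
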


\begin{proof}
For any $\rho\in\Gamma_{\tau}$, using \eqref{eq:gibbs-form} and the constraints
$\mathrm{Tr}(\rho A_j)=\tau_j$,
\[
L_{\log}(\rho,\rho_{\tau})
=
-\mathrm{Tr}\bigl(\rho\log\rho_{\tau}\bigr)
=
-\mathrm{Tr}\Bigl(\rho\Bigl(cI+\sum_{j=1}^m\beta_j A_j\Bigr)\Bigr)
=
-\Bigl(c+\sum_{j=1}^m\beta_j\tau_j\Bigr),
\]
which is independent of $\rho$. This proves item~1.

Next, we recall the standard decomposition for $\rho\in\mathcal{D}_n$ and full-rank
$\sigma\in\mathcal{D}_n$:
\[
L_{\log}(\rho,\sigma)
=
S(\rho)+D(\rho\Vert\sigma),
\]
where $S(\rho)=-\mathrm{Tr}(\rho\log\rho)$ is the von Neumann entropy and
$D(\rho\Vert\sigma)=\mathrm{Tr}\bigl(\rho(\log\rho-\log\sigma)\bigr)\ge 0$ is the
quantum relative entropy. Applying this with $\sigma=\rho_{\tau}$ gives, for all
$\rho\in\Gamma_{\tau}$,
\[
L_{\log}(\rho,\rho_{\tau})
=
S(\rho)+D(\rho\Vert\rho_{\tau})
\ge
S(\rho).
\]
By item~1, $L_{\log}(\rho,\rho_{\tau})=L_{\log}(\rho_{\tau},\rho_{\tau})=S(\rho_{\tau})$,
hence $S(\rho)\le S(\rho_{\tau})$ for all $\rho\in\Gamma_{\tau}$, proving item~2.

Finally, item~1 yields
\[
\sup_{\rho\in\Gamma_{\tau}} L_{\log}(\rho,\rho_{\tau})
=
L_{\log}(\rho_{\tau},\rho_{\tau})
=
S(\rho_{\tau}).
\]
On the other hand, for any $\sigma\in\mathcal{D}_n$,
\[
\sup_{\rho\in\Gamma_{\tau}} L_{\log}(\rho,\sigma)
\ge
L_{\log}(\rho_{\tau},\sigma)
=
S(\rho_{\tau})+D(\rho_{\tau}\Vert\sigma)
\ge
S(\rho_{\tau}),
\]
so $\rho_{\tau}$ attains the minimax value and $(\rho_{\tau},\rho_{\tau})$ satisfies
the saddle inequalities. This proves item~3.
\end{proof}

\subsection{Maximum Conditional von Neumann Entropy and a Minimax Interpretation}
\label{app:conditional-maxvne}

This appendix develops a conditional analogue of the maximum entropy principle in the
cq setting, where Nature selects matrix-valued conditional states.
The presentation parallels the minimax viewpoint of \cite{FarniaTse2016Minimax},
with the key difference that conditional distributions are replaced by conditional
density matrices.

Let $\mathcal{X}$ be a finite alphabet and let $p\in\Delta(\mathcal{X})$ be a fixed
distribution known to the decision maker (DM). Fix $\varepsilon\in(0,1/n)$ and define
$\mathcal{D}_{n,\varepsilon}:=\{\rho\succeq 0:\mathrm{Tr}(\rho)=1,\ \rho\succeq\varepsilon I\}$.
A cq state is specified by $\rho=\{\rho_x\}_{x\in\mathcal{X}}$ with
$\rho_x\in\mathcal{D}_{n,\varepsilon}$ and corresponds to the block-diagonal matrix
\[
\rho_{XQ}
=
\sum_{x\in\mathcal{X}} p(x)\,|x\rangle\langle x|\otimes\rho_x .
\]
Let $\Gamma$ be a nonempty, convex, and compact set of such cq states.
The DM selects an action $\sigma=\{\sigma_x\}_{x\in\mathcal{X}}$ with
$\sigma_x\in\mathcal{D}_{n,\varepsilon}$, i.e. the action set is
$\mathcal{Q}=(\mathcal{D}_{n,\varepsilon})^{|\mathcal{X}|}$. We consider the conditional log loss defined as:
\begin{equation}
L_{\log}(\rho,\sigma)
\;:=\;
\sum_{x\in\mathcal{X}} p(x)\,
\bigl[-\mathrm{Tr}\bigl(\rho_x\log\sigma_x\bigr)\bigr],
\label{eq:cond-log-loss}
\end{equation}
which is finite and continuous on $\Gamma\times\mathcal{Q}$ by the restriction
$\sigma_x\succeq\varepsilon I$.

\paragraph{Loss-induced conditional entropy.}
For a general loss $L(\rho,\sigma)$ on $\Gamma\times\mathcal{Q}$, define the generalized
(conditional) entropy and minimax value as
\[
H(\rho):=\inf_{\sigma\in\mathcal{Q}} L(\rho,\sigma),
\qquad
V_\Gamma:=\inf_{\sigma\in\mathcal{Q}}\sup_{\rho\in\Gamma} L(\rho,\sigma).
\]
For the conditional log loss, the minimization decomposes over $x$:
\[
\inf_{\sigma\in\mathcal{Q}} L_{\log}(\rho,\sigma)
=
\sum_{x\in\mathcal{X}} p(x)\,
\inf_{\sigma_x\in\mathcal{D}_{n,\varepsilon}}
\bigl[-\mathrm{Tr}(\rho_x\log\sigma_x)\bigr].
\]
Using the standard identity
\[
-\mathrm{Tr}(\rho_x\log\sigma_x)=S(\rho_x)+D(\rho_x\Vert\sigma_x),
\]
where $S(\rho_x):=-\mathrm{Tr}(\rho_x\log\rho_x)$ and $D(\cdot\Vert\cdot)\ge 0$ is the
quantum relative entropy, we obtain
\begin{equation}
H(\rho)
=
\sum_{x\in\mathcal{X}} p(x)\,S(\rho_x)
=: S(Q|X)_\rho,
\label{eq:cond-vne}
\end{equation}
and the infimum is attained at $\sigma_x=\rho_x$ for all $x$.
Thus, the generalized entropy induced by the conditional log loss coincides with the
conditional von Neumann entropy for cq states.

\paragraph{Maximum conditional entropy and minimax optimality.}
Consider the minimax game $\inf_{\sigma\in\mathcal{Q}}\sup_{\rho\in\Gamma}L_{\log}(\rho,\sigma)$.
We aim to identify $\rho=\{\rho_x\}$ with an element of the finite-dimensional product space
$\bigl(\mathbb{S}^n\bigr)^{|\mathcal{X}|}$, and equip it with the weighted pairing
\[
\bigl\langle \rho,\rho'\bigr\rangle
:=
\sum_{x\in\mathcal{X}} p(x)\,\mathrm{Tr}\bigl(\rho_x\rho'_x\bigr),
\]
and similarly for $\sigma$. Under this identification, $\Gamma$ and $\mathcal{Q}$ remain
convex and compact, and $L_{\log}$ is affine in $\rho$ and convex in $\sigma$.
Applying Theorem~\ref{thm:general-minimax} to this product-space game yields minimax
equality, existence of a saddle point $(\rho^\star,\sigma^\star)$, and the entropy
characterization
\[
\rho^\star \in \arg\max_{\rho\in\Gamma} H(\rho)
=
\arg\max_{\rho\in\Gamma} S(Q|X)_\rho.
\]
In particular, \emph{maximizing conditional von Neumann entropy} selects Nature's
equilibrium state in the minimax game, and the corresponding action $\sigma^\star$ is
minimax-robust.

\begin{remark}
For the quadric (order-2) special case, we define the quadratic conditional loss
\[
L_{2}(\rho,\sigma)
:=
\sum_{x\in\mathcal{X}} p(x)\,
\bigl[\mathrm{Tr}(\sigma_x^2)-2\,\mathrm{Tr}(\rho_x\sigma_x)\bigr].
\]
Completing the square gives, for each $x$,
\[
\mathrm{Tr}(\sigma_x^2)-2\,\mathrm{Tr}(\rho_x\sigma_x)
=
\mathrm{Tr}\bigl((\sigma_x-\rho_x)^2\bigr)-\mathrm{Tr}(\rho_x^2).
\]
Therefore, the Bayes risk is
\[
\inf_{\sigma\in\mathcal{Q}} L_2(\rho,\sigma)
=
-\sum_{x\in\mathcal{X}} p(x)\,\mathrm{Tr}(\rho_x^2),
\]
where the optimal value is attained at $\sigma_x=\rho_x$.
Since $S_2(\rho_x)=-\log\mathrm{Tr}(\rho_x^2)$, maximizing
$\sum_x p(x)S_2(\rho_x)$ is equivalent to maximizing the generalized entropy above,
and the same minimax interpretation applies.
\end{remark}
\section{Additional Related Work}

Beyond the works discussed in the main text, a broader literature studies entropy- and information-theoretic principles for learning, inference, and evaluation in modern machine learning. In particular, entropy-based objectives have been used to evaluate and compare generative models by quantifying coverage and diversity of learned distributions. Information-theoretic evaluations of multi-modal and generative models using entropy and mutual-information-inspired criteria are developed in \cite{jalali2023information}, while interpretable analyses of entropy-based novelty and diversity measures are provided in \cite{zhang2024interpretable}. Related reference-free and scalable evaluation frameworks for generative models further emphasize spectral and kernel-based information measures \cite{ospanov2024towards}.

A growing body of work extends von Neumann entropy and kernel-based information measures to prompt-aware and conditional generative modeling. Prompt-aware diversity and novelty guidance using kernel entropy and R\'enyi-type objectives is studied in \cite{jalalisparke}, while prompt-aware diversity evaluation via Schur-complement decompositions of embedding covariance structures is developed in \cite{ospanov2025scendi}. An information-theoretic treatment of diversity evaluation in prompt-based generative models is further explored in \cite{jalali2025information}. These works highlight the importance of conditional and prompt-dependent entropy notions, complementing the conditional and minimax perspectives discussed in the main text.

Spectral entropy measures also arise in graph and network analysis. Von Neumann graph entropy defines an entropy functional on normalized graph Laplacians and has been used to quantify graph complexity, structural change, and dynamics, together with scalable approximations for large graphs \cite{Chen2018VNGE}. This line of work reinforces the idea of treating normalized positive semidefinite matrices as density-matrix-like objects and studying entropy directly at the spectral level.

Finally, entropy and diversity principles naturally connect to mixture modeling and aggregation of multiple models or data sources. Diversity-aware mixture selection and aggregation methods appear in learning and decision-making contexts, including bandit-based approaches that aim to construct mixtures that are more diverse than any single component \cite{rezaei2024more}. These approaches are complementary to the maximum von Neumann entropy principle studied in this paper, which provides a static, information-theoretic justification for selecting mixtures and kernel representations under partial information.

\section{Additional Numerical Results on MAX-VNE-Selected Mixture of Embeddings}

\subsection{Implementation Details}
To select the mixture weights, we construct four RBF kernel matrices on a fixed subset of the ImageNet-1K training set, corresponding to OpenCLIP, DINOv2, SigLIP, and UNICOM embeddings.
Specifically, we randomly sample $200$ classes and select $50$ images per class, resulting in $10{,}000$ images in total.
The bandwidth parameter of the RBF kernel is selected individually for each model.
In particular, we choose the bandwidth such that the resulting kernel matrices have comparable von Neumann entropy values when considered independently.
This design choice avoids biasing the mixture optimization toward a specific model due to kernel sharpness or scale effects.
The mixture weights $\alpha$ are obtained by maximizing the von Neumann entropy using projected gradient ascent (PGA).
At each iteration, gradients with respect to $\alpha$ are computed analytically, followed by an Euclidean projection onto the simplex.
The optimization is initialized with uniform weights and optimized using a fixed learning rate of $0.1$ for at most $100$ iterations.

\subsubsection{Additional Linear Probe Results on Mixture of Linear Kernel via Max-VNE}
In addition to the RBF-kernel-based mixture results reported in the main text, we further evaluate Max-VNE-selected mixtures obtained using linear kernels.
Specifically, for each pretrained embedding, we construct a linear kernel and apply the same Max-VNE optimization procedure to select the mixture weights $\alpha^\star$.
The resulting mixed representation is formed by scaling each embedding with its corresponding weight and concatenating them.
Linear probing is then performed using exactly the same evaluation protocol as in the main text, including the same optimizer, learning rate, batch size, and number of training epochs.
The additional results are reported in Table~\ref{tab:lp_std_linear_kernel}, showing that the Max-VNE-selected mixture based on linear kernels achieves competitive performance, indicating that the proposed mixture selection strategy is not sensitive to the specific kernel choice.

\begin{table}[h]
\caption{linear probing accuracy of embeddings vs. Max-VNE-selected mixture of embeddings (\%, mean $\pm$ std over 5 runs).}
\label{tab:lp_std_linear_kernel}
\centering
\begin{tabular}{lcccc}
\toprule
\textbf{Model} & \textbf{ImageNet} & \textbf{CIFAR100}& \textbf{DTD} & \textbf{Aircraft} \\
\midrule
OpenCLIP    &  $78.9 \pm 0.14$  & $86.3 \pm 0.02$ &  $77.7 \pm 0.23$ & $49.2 \pm 0.39$\\
SigLIP      &  $81.8 \pm 0.16$  & $83.3 \pm 0.04$ &  $80.3 \pm0.05$ &  $60.7 \pm 0.41$ \\
DINOv2      &  $80.0 \pm 0.19$  & $89.9 \pm 0.13$ &  $79.9 \pm0.22$ &  $58.9 \pm 0.63 $\\
UNICOM      &  $77.4 \pm 0.08$  & $85.0 \pm0.04 $ &  $71.6 \pm 0.53$ &  $55.4 \pm 0.85$\\
\midrule
Max-VNE     &  $83.1 \pm 0.09$  & $91.2 \pm 0.09$ &  $82.3 \pm 0.13$  & $63.2 \pm 0.37$ \\
\bottomrule
\end{tabular}
\end{table}

\subsection{Additional Numerical Results for MAX-VNE  Kernel Completion}

\subsubsection{Implementation Details}
In the main experiments, we implement the Max-VNE principle using an order-2 R\'enyi entropy objective, which corresponds to minimizing the purity of the recovered density matrix.
To ensure scalability to large datasets, we parameterize the completed kernel using a low-rank factorization with a learnable matrix $U \in \mathbb{R}^{n \times r}$, which guarantees positive semidefiniteness by construction.
This factorization avoids explicit optimization over full $n \times n$ matrices and enables efficient computation.
The optimization problem is solved using gradient-based methods.
Specifically, we optimize the low-rank factors with rank $r=50$ using Adam with a fixed learning rate of $1\times10^{-3}$ until convergence.
In the ablation study, we further report results obtained by maximizing the von Neumann entropy, as well as by varying the factorization rank and the observation ratio.

\begin{figure}
    \centering
    \includegraphics[width=\textwidth]{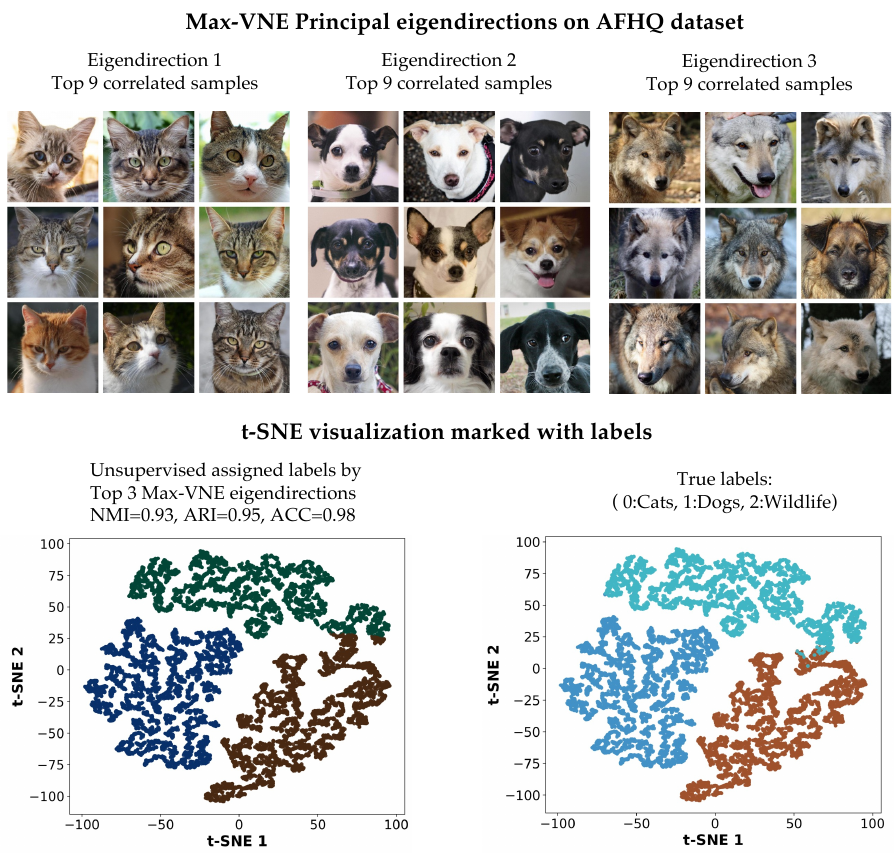}
    \caption{The experimental results of Max-VNE kernel completion from 10\% observed entries on the AFHQ dataset. The clustering performance evaluation of the recovered kernel matrix is performed using Normalized Mutual Information (NMI), Adjusted Rand Index (ARI), and Accuracy (ACC).}\vspace{-6mm}

    \label{fig:VNE_afhq_tsne}
\end{figure}

\subsubsection{Additional Results of Maximizing \emph{Von Neumann Entropy}}
In addition to the order-2 Rényi entropy objective used in the main text, we report supplementary results obtained by directly maximizing the von Neumann entropy for kernel matrix completion.
Following the same experimental setting, the kernel matrix is parameterized in a low-rank form as $\mathbf{K}=\mathbf{U}\mathbf{U}^\top$ and normalized to have unit trace.
The von Neumann entropy is optimized using a log-determinant surrogate defined on the resulting low-dimensional covariance, which avoids explicit eigendecomposition of the full kernel.
As shown in Figure~\ref{fig:VNE_afhq_tsne}, the recovered kernels yield clustering results on AFHQ that are qualitatively and quantitatively comparable to those obtained with the order-2 objective, indicating that the observed behavior is robust to the choice of entropy formulation.

\subsubsection{Ablation Study}

\paragraph{Additional Results on the More Challenge Dataset}

We further evaluate the proposed kernel matrix completion approach on the ImageNet-Dogs dataset~\cite{Howard_Imagewoof_2019}, which is known to be a particularly challenging benchmark due to the fine-grained visual distinctions among dog breeds.
Compared to datasets such as AFHQ or MNIST, ImageNet-Dogs exhibits significantly higher intra-class variability and subtle inter-class differences, making spectral recovery from partially observed kernels substantially more difficult.
Following the same experimental protocol as in the main text, we construct a density kernel and randomly mask $90\%$ of the off-diagonal entries.
The results in figure~\ref{fig:dog_tsne} indicate that the Max-VNE principle is still able to recover some meaningful spectral structure under severe partial observation, even in highly challenging fine-grained settings.

\begin{figure}
    \centering
    \includegraphics[width=\textwidth]{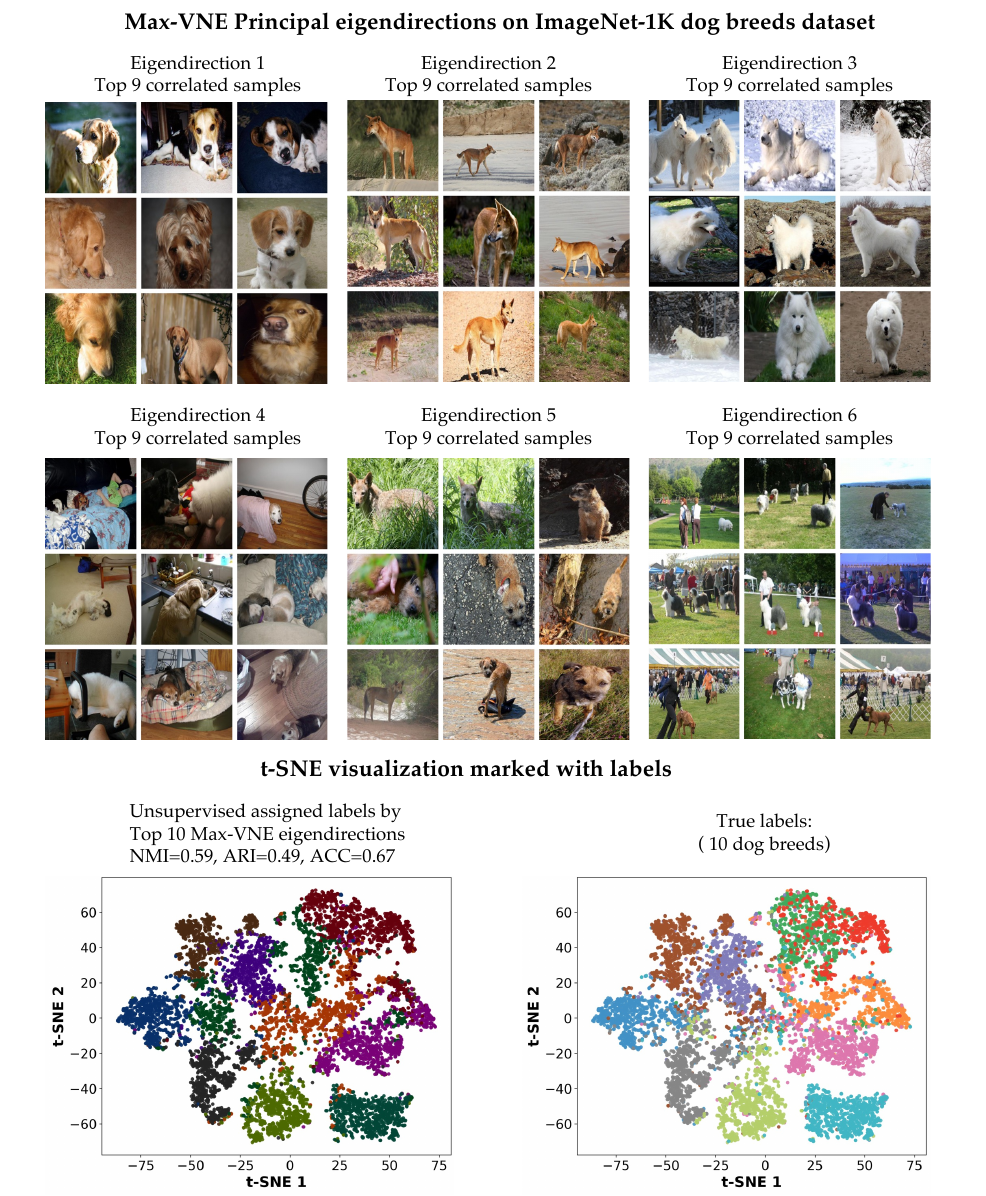}
    \caption{The experimental results of Max-VNE kernel completion from 10\% observed entries on the ImageNet-1k dog breeds dataset. The clustering performance evaluation of the recovered kernel matrix is performed using Normalized Mutual Information (NMI), Adjusted Rand Index (ARI), and Accuracy (ACC).}\vspace{-6mm}

    \label{fig:dog_tsne}
\end{figure}

\paragraph{Ablation on the Factorization Rank $r$}
We study the effect of the factorization rank $r$ used in kernel matrix completion.
Experiments are conducted on the AFHQ dataset with a fixed masking ratio of $90\%$, while varying the rank $r \in \{10, 20, 50\}$.
All other optimization settings are kept identical.
In the figure~\ref{fig:ablation_r}, we observe that the recovery performance and downstream spectral clustering results are already stable for very small ranks.

% This behavior suggests that the underlying kernel admits a low effective rank, with its dominant spectral structure captured by a small number of leading eigen-directions.
% From a maximum entropy perspective, this result is expected: once the principal eigenspaces are determined by the observed constraints, allocating additional degrees of freedom does not increase entropy and may instead fit noise.
% Overall, the ablation demonstrates that the proposed Max-VNE-based kernel completion method is robust to the choice of rank and does not require careful tuning of this parameter.

\begin{figure}
    \centering
    \includegraphics[width=\textwidth]{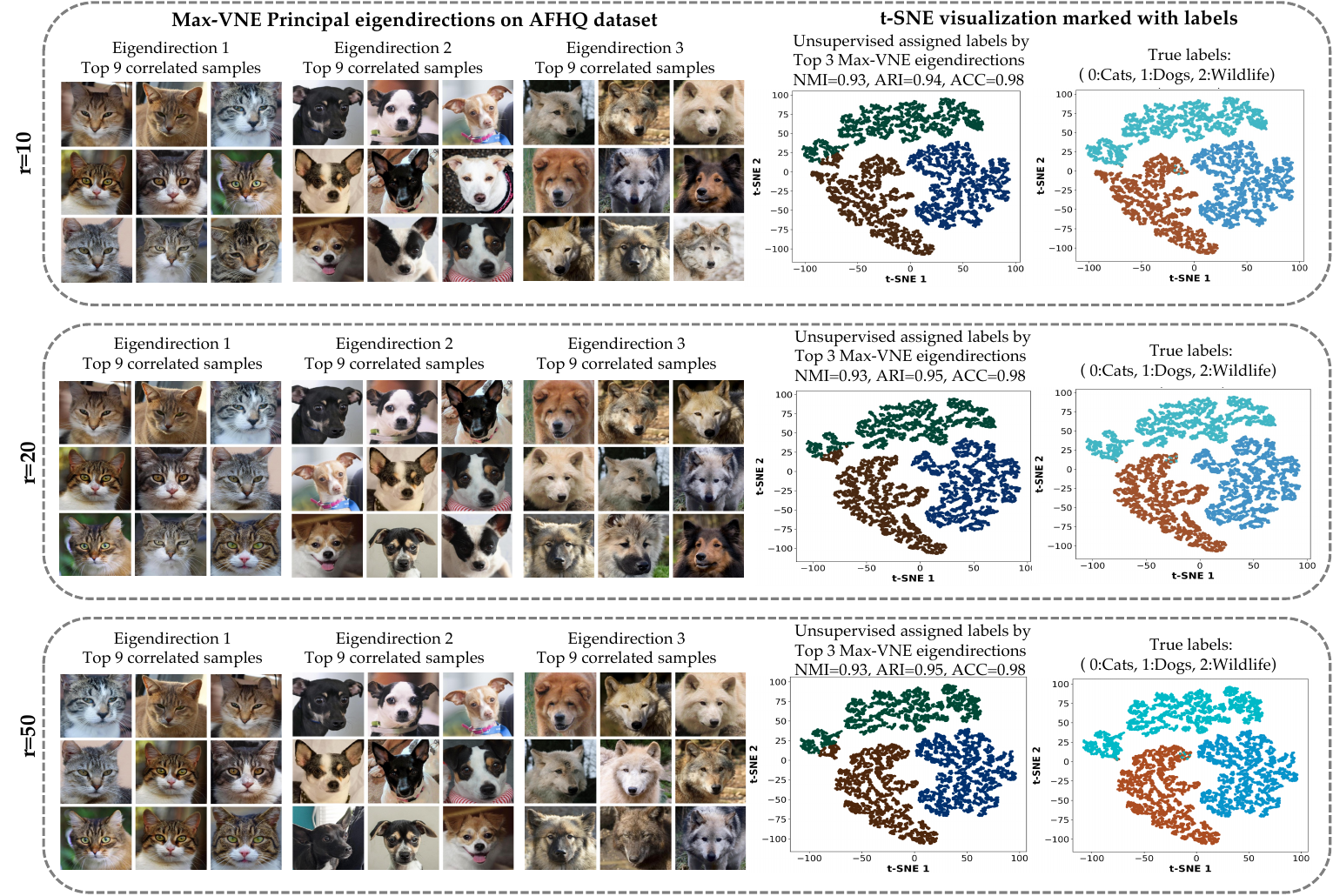}
    \caption{Ablation on the factorization rank $r$ for Max-VNE kernel completion on the AFHQ dataset with $10\%$ observed entries. 
    Clustering performance of the recovered kernel matrix is evaluated using Normalized Mutual Information (NMI), Adjusted Rand Index (ARI), and clustering accuracy (ACC).}
    \label{fig:ablation_r}
\end{figure}

\paragraph{Ablation on the Observation Ratio.}
We study the effect of the observation ratio on the AFHQ dataset with $1\%$, $5\%$, and $10\%$ randomly observed off-diagonal entries, while keeping all other settings fixed.
In figure~\ref{fig:ablation_mask}, we observe that the recovery performance improves when increasing the observation ratio from $1\%$ to $5\%$, and becomes stable thereafter.

\begin{figure}
    \centering
    \includegraphics[width=\textwidth]{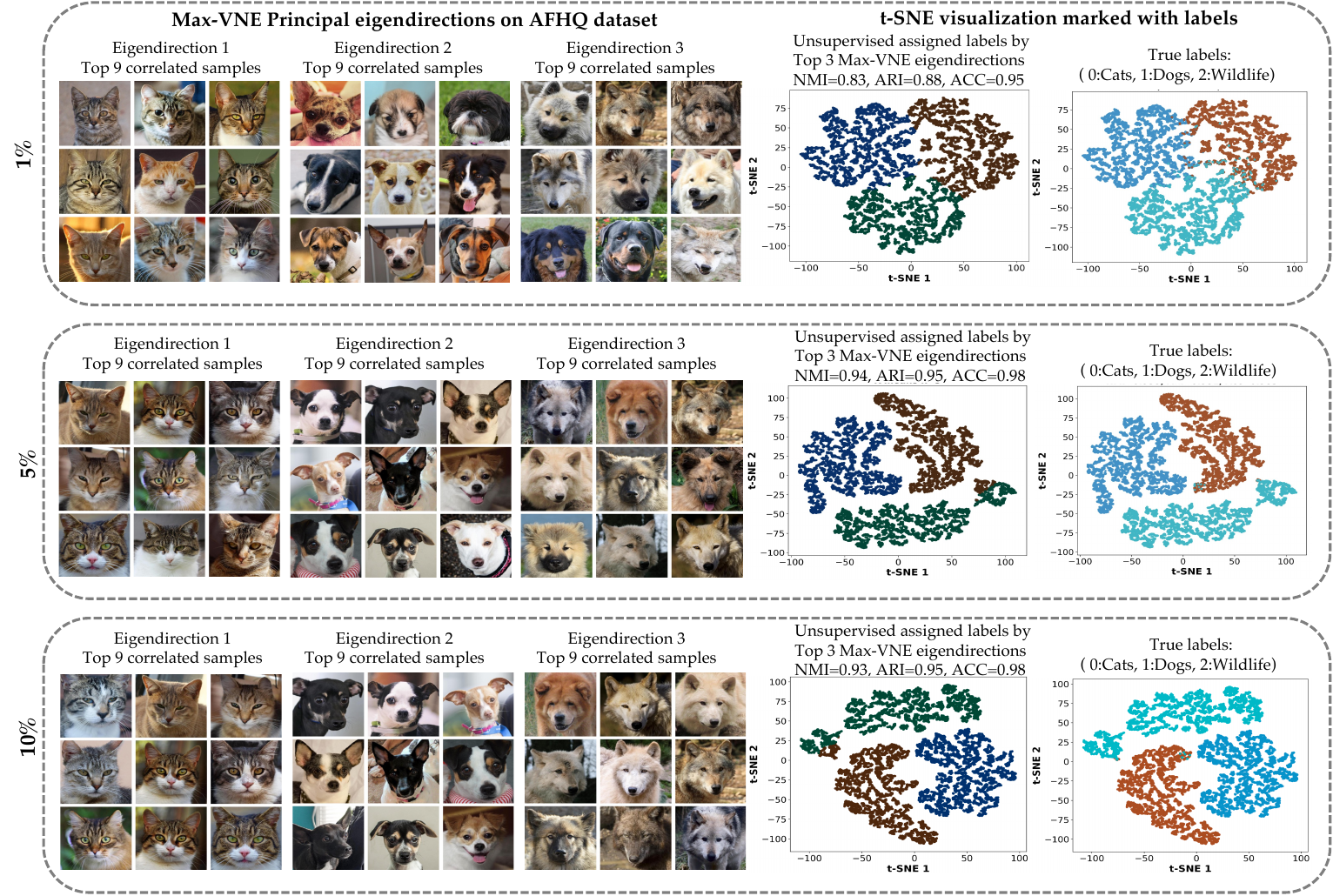}
    \caption{Ablation on the different observation ratio of kernel matrix on the AFHQ dataset with factorization rank fixed to 50. 
    Clustering performance of the recovered kernel matrix is evaluated using Normalized Mutual Information (NMI), Adjusted Rand Index (ARI), and clustering accuracy (ACC).}
    \label{fig:ablation_mask}
\end{figure}

\end{appendices}

\end{document}